\documentclass[10pt,journal,compsoc]{IEEEtran}

\usepackage{times}
\usepackage{graphicx}
\usepackage{amsmath}
\usepackage{amssymb}

\usepackage[american]{babel}
\usepackage{microtype}
\usepackage{multirow}
\usepackage{booktabs}
\graphicspath{{figure/}}
\usepackage[caption=false]{subfig}
\usepackage{siunitx}
\usepackage{xcolor}
\usepackage{colortbl}
\usepackage{diagbox}
\allowdisplaybreaks[3]

\definecolor{Shade}{rgb}{0.82, 0.9, 0.94}

\usepackage{bm}
\def\vzero{{\bm{0}}}
\def\vb{{\bm{b}}}
\def\vf{{\bm{f}}}
\def\vh{{\bm{h}}}

\def\vl{{\bm{l}}}
\def\vp{{\bm{p}}}
\def\vv{{\bm{v}}}
\def\vu{{\bm{u}}}
\def\vw{{\bm{w}}}
\def\vx{{\bm{x}}}
\def\vy{{\bm{y}}}
\def\mW{{\bm{W}}}
\def\mR{{\bm{R}}}
\def\sB{{\mathbb{B}}}
\def\sO{{\mathbb{O}}}
\def\sN{{\mathbb{N}}}
\def\sL{{\mathbb{L}}}
\newcommand{\E}{\mathbb{E}}
\newcommand{\R}{\mathbb{R}}
\newcommand{\N}{\mathbb{N}}
\newcommand{\sigmoid}{\sigma}
\newcommand{\Loss}{\mathcal{L}}
\DeclareMathOperator{\sign}{sign}
\DeclareMathOperator{\dir}{unit}
\DeclareMathOperator{\anglefn}{\angle}
\newcommand*{\sphere}[1]{{\mathbb{S}^{{#1} - 1}}}
\def\deri#1#2{\frac{\mathrm{d} {#1}}{\mathrm{d}{#2}}}

\newcommand*{\tran}{^{\mkern-1.5mu\mathsf{T}}}
\newcommand*{\normtwo}[1]{{\left\|{#1}\right\|_2}}
\newcommand*{\prob}[1]{{\Pr{\left\{{#1}\right\}}}}
\newcommand*{\expect}[2]{{\E_{{#2}}{\left[{#1}\right]}}}

\newcommand*{\pd}[1]{{p{\left\{{#1}\right\}}}}
\newcommand*{\pdfunc}[1]{{p{\left({#1}\right)}}}

\newtheorem{claim}{Claim}
\newtheorem{lemma}[claim]{Lemma}
\newtheorem{corollary}{Corollary}[claim]

\newenvironment{proof}{{\noindent\it Proof.}\quad}{\hfill $\square$\par}

\def\Lemref#1{Lemma~\ref{#1}}

\def\Corref#1{Corollary~\ref{#1}}
\def\eqref#1{equation~\ref{#1}}
\def\Eqref#1{Equation~\ref{#1}}

\ifCLASSOPTIONcompsoc
\usepackage[nocompress]{cite}
\else
\usepackage{cite}
\fi
\usepackage{url}

\hyphenation{net-works}

\begin{document}
	
\title{Distilling Knowledge by Mimicking Features}

\author{Guo-Hua~Wang,
	Yifan~Ge,
	and~Jianxin~Wu,~\IEEEmembership{Member,~IEEE}
	\IEEEcompsocitemizethanks{\IEEEcompsocthanksitem All authors are with the State Key Laboratory for Novel Software Technology, Nanjing University, Nanjing 210023, China. J. Wu is the corresponding author.\protect\\
		E-mail: \{wangguohua, geyf, wujx\}@lamda.nju.edu.cn.}
	\IEEEcompsocitemizethanks{\IEEEcompsocthanksitem This research was partly supported by the National Natural Science Foundation of China under Grant 61772256 and Grant 61921006.}
}

\markboth{To appear in IEEE Trans. PAMI}%
{Distilling Knowledge by Mimicking Features}

\IEEEtitleabstractindextext{%
	\begin{abstract}
		Knowledge distillation (KD) is a popular method to train efficient networks (``student'') with the help of high-capacity networks (``teacher''). Traditional methods use the teacher's soft logits as extra supervision to train the student network. In this paper, we argue that it is more advantageous to make the student mimic the teacher's features in the penultimate layer. Not only the student can directly learn more effective information from the teacher feature, feature mimicking can also be applied for teachers trained without a softmax layer. Experiments show that it can achieve higher accuracy than traditional KD. To further facilitate feature mimicking, we decompose a feature vector into the magnitude and the direction. We argue that the teacher should give more freedom to the student feature's magnitude, and let the student pay more attention on mimicking the feature direction. To meet this requirement, we propose a loss term based on locality-sensitive hashing (LSH). With the help of this new loss, our method indeed mimics feature directions more accurately, relaxes constraints on feature magnitudes, and achieves state-of-the-art distillation accuracy. We provide theoretical analyses of how LSH facilitates feature direction mimicking, and further extend feature mimicking to multi-label recognition and object detection.
	\end{abstract}
	
	\begin{IEEEkeywords}
		Convolutional Neural Networks, Deep Learning, Knowledge Distillation, Image Classification, Object Detection.
\end{IEEEkeywords}}

\maketitle

\IEEEdisplaynontitleabstractindextext
\IEEEpeerreviewmaketitle
\IEEEraisesectionheading{\section{Introduction}\label{sec:introduction}}

\IEEEPARstart{R}{ecently}, deep learning has achieved remarkable success in many visual recognition tasks. To deploy deep networks in devices with limited  resources, more and more efficient networks have been proposed \cite{mobilenet,ShuffleNetV2}. Knowledge distillation (KD)~\cite{KD} is a popular method to train these efficient networks (named ``student'') with the help of high-capacity networks (named ``teacher'').

Initial study of KD~\cite{KD} used the softmax output of the teacher network as the extra supervisory information for training the student network. However, the output of a high-capacity network is not significantly different from groundtruth labels. And, due to the existence of the classifier layer, the softmax output contains less information compared with the representation in the penultimate layer. These issues hinder the performance of a student model. In addition, it is difficult for KD to distill teacher models trained by unsupervised or self-supervised learning~\cite{RotNet,simclr,MoCoV2,MoCo}.

Feature distillation has received more and more attention in recent years~\cite{fitnet,FT,CRD,SSKD}. However, previous works only focused on distilling features in the middle layers~\cite{fitnet} or transforming the features~\cite{FT}. Few have addressed the problem of making the student directly mimic the teacher's feature in the penultimate layer. Distilling features in the middle layers suffers from the different architectures between teacher and student, while transforming the features may lose some information in the teacher. We believe it is a better way to \emph{directly mimic the feature for knowledge distillation, in which we only mimic the feature in the penultimate layer}. Compared with KD, it does not need the student model to learn a classifier from the teacher. Feature mimicking can be applied to a teacher trained by unsupervised, metric or self-supervised learning, and can be easily used when the teacher and student have different architectures. Furthermore, if the student features are the same as the teacher's, the classification accuracy will surely be the same, too.

Some reasons may explain why feature mimicking has not yet been popular in the literature. First, previous work used the mean squared loss ($\ell_2$ loss) to distill features. In this paper, we decompose a feature vector into the magnitude and the direction. The $\ell_2$ loss focuses on both magnitude and direction. But due to the different capacities, the student cannot mimic the teacher in its entirety. In fact, only the direction affects the classification result while the magnitude mainly represents the confidence of prediction~\cite{feature_norm_kd}. We find that different networks often have different feature magnitudes (cf. Table~\ref{tab:arch}). That inspires us to give more freedom to the student feature's magnitude. One possible approach to tackle this problem is to distill the feature after $\ell_2$-normalization~\cite{FT}. However, it will lose all magnitude information about the teacher feature and make the optimization difficult~\cite{normface}. In this paper, we propose a loss term which focuses on the feature direction \emph{and gives more freedom to its magnitude}, which alleviates the shortcomings of the $\ell_2$ loss (cf. Figure~\ref{fig:lsh}). 

Second, when teacher and student features have different dimensionalities, difficulty arises. To solve this problem, we split the final fully connected (FC) layer of the student network into two FC layers without non-linear activation in-between. The dimensionality of the first FC layer matches that of the teacher feature. The two FCs can be merged into one after training. Hence, no extra parameter or computation is added in the student's architecture during inference. 

Third, even though the feature structure of the student is the same as that of the teacher, their feature space may misalign (cf. Figure~\ref{fig:feature}). If we have the freedom to rotate and rescale the student's feature space, it will align to the teacher's feature space better. Thanks to our two FC structure in the proposed feature mimicking method, we demonstrate that the first FC layer can transform the student's feature space and make feature mimicking easier, which is particularly important when the student network is initialized using a pretrained model (i.e., the student has formed a basic feature space to finetune rather than a random feature space).

Our contributions are as follows. 

\begin{itemize}
	\item We argue that directly mimicking features in the penultimate layer is advantageous for knowledge distillation. It produces better performance than distilling logits after log-softmax (as in~\cite{KD}). It can be applied when the teacher and student have different architectures, while distilling features in the middle layers cannot.
	\item We claim that the feature's direction contains more effective information than its magnitude, and we should allow more freedom to the student feature's magnitude. We propose a loss term based on Locality-Sensitive Hashing (LSH)~\cite{LSH} to meet this requirement, and theoretically show why LSH fits this purpose.
	\item We propose a training strategy for mimicking features in transfer learning. With a pretrained student, we first transform its feature space to align to the teacher's, then finetune the student on the target dataset with our loss function. Our method is flexible and handles multi-label recognition well, while existing KD methods are difficult to apply to multi-label problems.
\end{itemize}
Our feature mimicking framework achieves state-of-the-art results on both single-label and multi-label recognition, and object detection tasks. 

The rest of this paper is organized as follows. First, we review the related work in Section~\ref{sec:related_work}. Then, we introduce our method for feature mimicking in Section~\ref{sec:method}, and mathematically analyze the effectiveness of it in Section~\ref{sec:theory}. Experimental results are reported and analyzed in Section~\ref{sec:experiments}. Finally, Section~\ref{sec:conclusion} concludes this paper.

\section{Related work}
\label{sec:related_work}

\textbf{Knowledge distillation} was first introduced in \cite{KD}, which proposed to use the teacher's soft logits after log-softmax as extra supervision to train the student. FitNet~\cite{fitnet} is the first work to distill the intermediate feature maps between teacher and student. Inspired by this, a variety of other feature-based knowledge distillation methods have been proposed. AT~\cite{AT} transfers the teacher knowledge to student by the spatial attention maps. AB~\cite{AB} proposes a knowledge transfer method via distillation of activation boundaries formed by hidden neurons. FitNet, AT and AB focus on activation maps of the middle layers, and it is difficult to apply them on cross-architecture settings. SP~\cite{SP} considers pairwise similarities of different features instead of mimicking the teacher's representation space. FSP~\cite{FSP} computes the inner product between features from two layers and treats it as the extra information to teach student. FT~\cite{FT} introduces a paraphraser to compress the teacher feature and uses the translator located at the student network to extract the student factors, then teaches the student by making student factors mimic teacher's compressed features. These methods transform the teacher's feature into other forms, which will lose some information in teacher features. In contrast, feature mimicking in the penultimate layer can apply on arbitrary teacher/student combinations and carry all information from the teacher. 

Recently, CRD~\cite{CRD} and SSKD~\cite{SSKD} take advantage of contrastive learning and transfer the structural knowledge of the teacher network to the student. In this paper, we argue that we can also achieve state-of-the-art by only mimicking features without explicitly considering the structural knowledge.

\textbf{Object detection} is a fundamental task in computer vision. Several previous works study knowledge distillation on the object detection task. ROI-mimic~\cite{ROI_mimic} mimics the features after ROI pooling. Fine-grained~\cite{DistillOD_FG} uses the ground truth bounding box to generate the foreground mask and distill the foreground features on the feature map. PAD~\cite{PAD} introduces the adaptive sample weighting to improve these distillation methods. In this paper, we will show that mimicking features in the penultimate layer works better. 

\textbf{Locality-sensitive hashing (LSH)} was first introduced in \cite{hash,approximate_nn}. With the help of p-stable distributions, \cite{LSH} extended the algorithm to the $\ell_2$ norm. With the rise of deep learning, hashing methods were widely used in image retrieval \cite{hashnet,DCH,deep_sup_hash,supervised_hash}. Most of them focused on how to learn good hash functions to transform images into compact codes. Different from that, we utilized LSH to help the student network to learn from the teacher network. To the best of our knowledge, we are the first to propose the use of LSH in distilling knowledge.

\section{Feature mimicking for knowledge distillation}
\label{sec:method}

\begin{figure*}[t]
	\centering
	\includegraphics[width=0.65\linewidth]{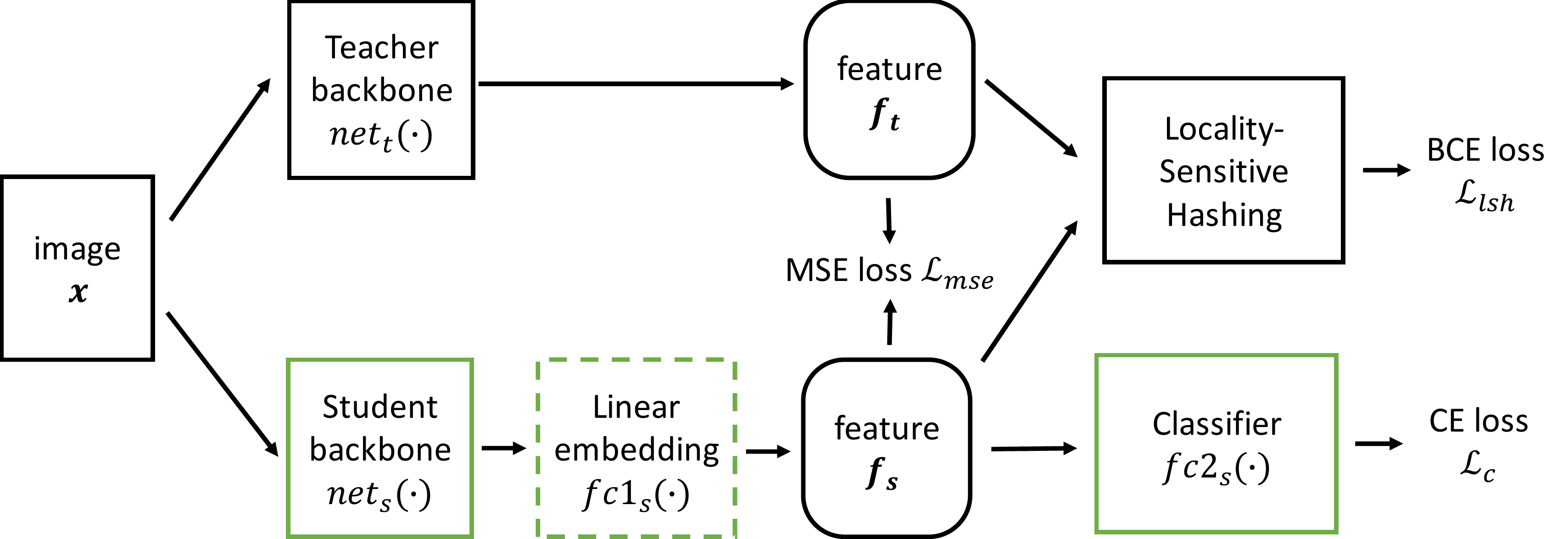}
	\caption{The pipeline of our method. We use a linear embedding layer to make sure the dimensionality of student's feature is the same as that of the teacher's. But, this embedding layer will be absorbed post-training. (This figure is best viewed in color.)}
	\label{fig:framework}
\end{figure*}

Figure~\ref{fig:framework} shows the pipeline of our method. Given an image $\vx$, the teacher backbone network extracts feature $\vf_t$, in which $\vf_t\in\R^{D_t}$ is the penultimate layer feature (after the global average pooling and before the final classifier or detection head). The student backbone network extracts feature $\vf_s$. To make the dimensionalities of $\vf_s$ and $\vf_t$ match, we add a linear embedding layer after the student backbone. Section~\ref{sec:The linear embedding layer} will introduce this module in detail. 

Three losses are used. $\Loss_c$ is the regular cross-entropy loss between the student output and the ground truth label of $\vx$. $\Loss_{mse}$ and $\Loss_{lsh}$ are used to make the student feature mimic the teacher's. More details about these two losses can be found in Section~\ref{sec:Locality-sensitive hashing}. More analyses are in Sections~\ref{sec:analysis} and \ref{sec:Ensemble all loss}. During training, modules with green boxes (student backbone, linear embedding and classifier) in Figure~\ref{fig:framework} need to be learned by back-propagation. Parameters in the teacher backbone and locality-sensitive hashing will \emph{not} change after initialization. Finally, Section~\ref{sec:Model initialization} discusses how to initialize our framework. We leave theoretical results for feature mimicking to Section~\ref{sec:theory}.

\subsection{The linear embedding layer}
\label{sec:The linear embedding layer}

When the dimensionality of the student's feature is different from that of the teacher's, we add a linear embedding layer before the student's classifier layer. Assume the dimensionality of student's features and teacher's are $D_s$ and $D_t$, respectively, the embedding layer is defined as
\begin{equation}
	fc1_s(\vf)=\mW_1\tran\vf+\vb_1\,,
\end{equation}
where $\mW_1\in\R^{D_s\times D_t}$ and $\vb_1\in\R^{D_t}$. The main advantage of this approach is that the embedding layer can be merged into the classifier without adding parameters or computation post-training. Assume the classifier is defined as
\begin{equation}
	fc2_s(\vf)=\mW_2\tran\vf+\vb_2\,,
\end{equation}
where $\mW_2\in\R^{D_t\times C}$ and $\vb_2\in\R^{C}$. Then, the final classifier for student can be computed by
\begin{align}
	fc_s(\vf)
	&=fc2_s(fc1_s(\vf)) \\
	&=(\mW_1\mW_2)\tran\vf+(\mW_2\tran\vb_1+\vb_2) \,.
\end{align}
$fc1_s$ and $fc2_s$ can be merged by setting the weights and bias for the final classifier as $\mW_1\mW_2$ and $\mW_2\tran\vb_1+\vb_2$, respectively.

This linear embedding layer shares similar idea as FSKD~\cite{FSKD}.  FSKD adds a $1\times 1$ conv at the end of each block of the student network and proves that the $1\times 1$ conv can be merged into the previous convolution layer. However, FSKD requires the teacher and student to share similar architectures, and adds more parameters during training.  Our method is more efficient and can be applied with different teacher/student architectures.

\begin{figure}[t]
	\centering
	\includegraphics[width=0.8\linewidth]{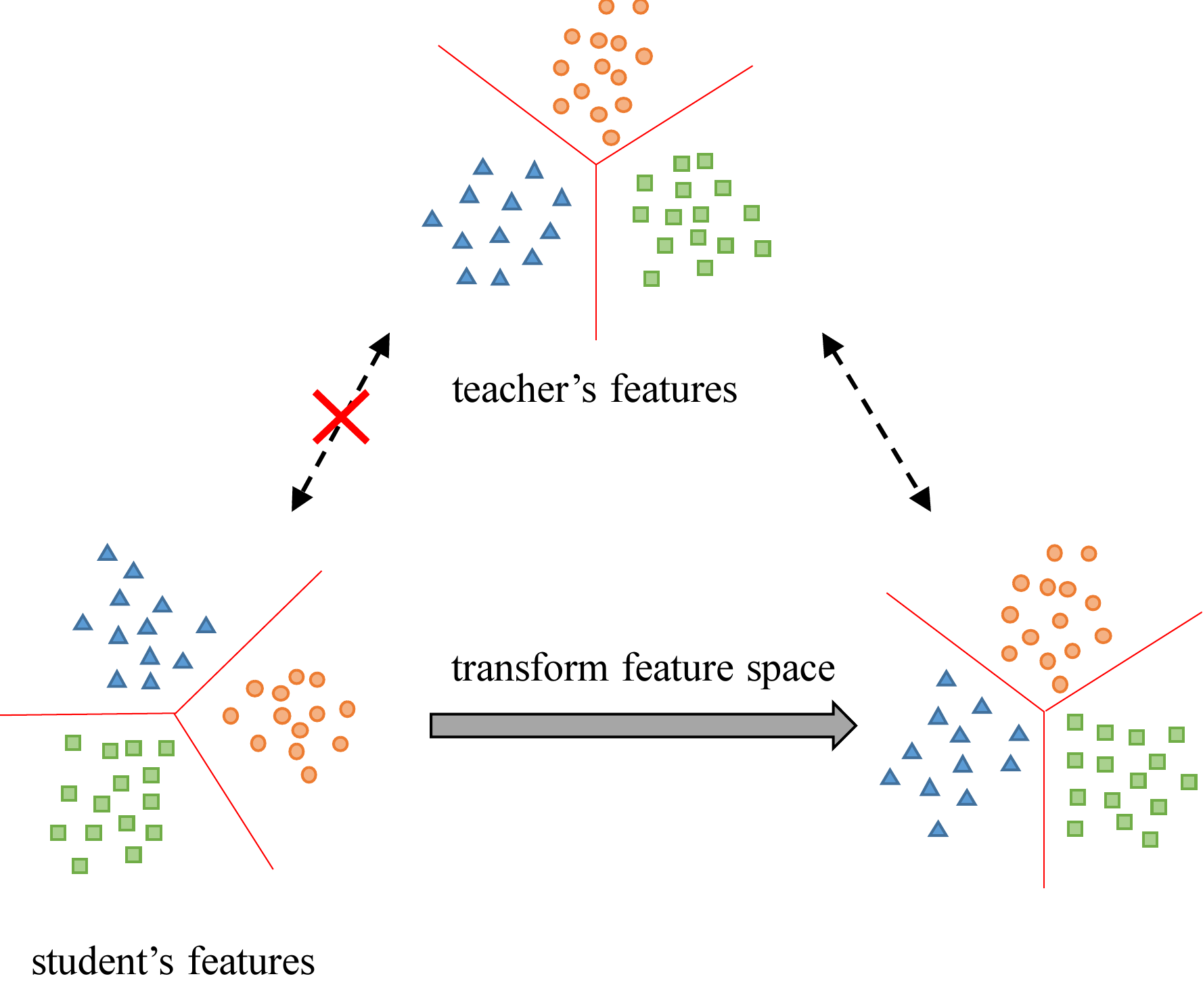}
	\caption{An illustration of the feature space misalignment issue. The points denote the features, and different colors with different shapes represent different classes. The student's feature space needs to rotate to align to the teacher's. (This figure is best viewed in color.)}
	\label{fig:feature}
\end{figure}

Even when the dimensionality of the student's feature is the same as that of the teacher's, the linear embedding layer may still be necessary. Because the teacher and the student may have significantly different network architectures, their feature spaces may be misaligned. The teacher and the student feature spaces, even when they encode the same semantic information, can still be subject to differences caused by transformations such as rotation and scaling. Figure~\ref{fig:feature} illustrates the feature space misalignment issue. Assume the penultimate layer feature is denoted by $\vf$ and the classifier's parameters are $\mW$ and $\vb$, respectively. The prediction can be computed by
\begin{align}
	\vp
	&=\mW\tran\vf+\vb \,.
\end{align}
Given any orthogonal matrix $\mR$, we have
\begin{align}
	\vp
	&=\mW\tran\mR\tran\mR\vf+\vb \\
	&=\mW_*\tran \vf_*+\vb \,,
\end{align}
where $\mW_*$ and $\vf_*$ are $\mR\mW$ and $\mR \vf$, respectively. That is, the feature space can be rotated without changing the prediction. 

Our linear embedding layer $fc1_s$ can learn any linear transformation (such as the above rotation $\mR$) to align the student's feature space to that of the teacher's. If we let the student mimic the teacher's features directly without aligning their feature spaces, the performance will be lower, especially when the student has been pretrained. Experimental validation of the importance of feature space alignment can be found in Section~\ref{sec:multi-label classification}.

\subsection{The LSH module}
\label{sec:Locality-sensitive hashing}

To mimic the teacher's feature, $\Loss_{mse}$ and $\Loss_{lsh}$ are used in our framework. $\Loss_{mse}$ is defined as
\begin{equation}
	\Loss_{mse} = \frac{1}{nD}\sum_{i=1}^{n}\|\vf_t(\vx_i) - \vf_s(\vx_i)\|^2_2\,,
\end{equation}
where $\vf_t(\vx_i)$ and $\vf_s(\vx_i)$ represent the teacher and student features for the $i$-th image in the training set, and $D$ denotes the dimensionality of the feature (after the linear embedding $fc1_s$). Note that $\Loss_{mse}$ addresses both feature direction and magnitude. On the contrary, we propose to use locality-sensitive hashing (LSH)~\cite{LSH} to give the student more freedom with regard to its magnitude, but let the student concentrate more on mimicking the feature direction.

\begin{figure}[t]
	\centering
	\includegraphics[width=0.8\linewidth]{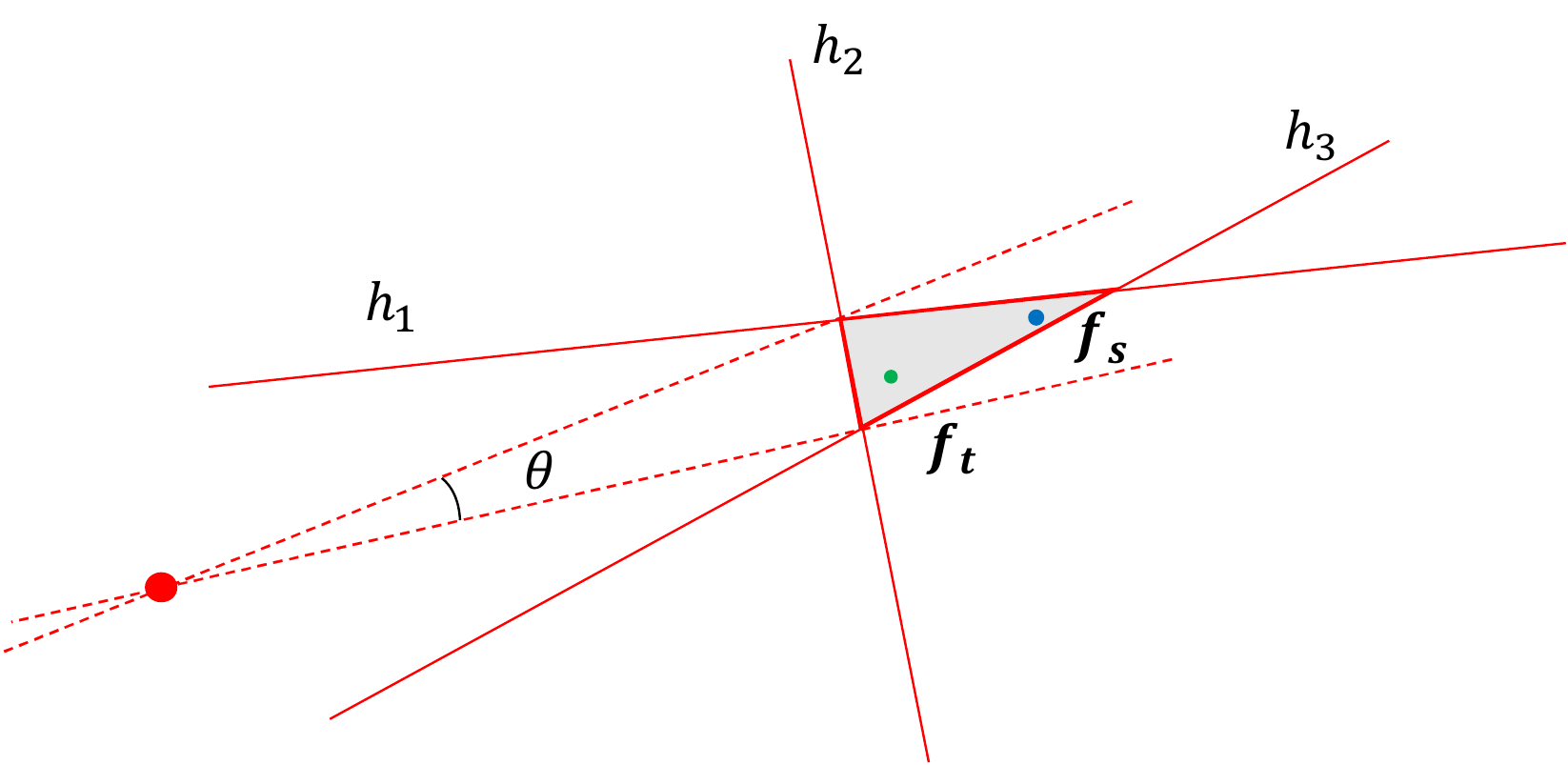}
	\caption{An illustration of the LSH loss. $\vf_s$ and $\vf_t$ denote a student and a teacher feature vector for the same input image, respectively. $h$ represents the hash function constraints. These constraints form a small polyhedron (the shaded region) and $\theta$ is the maximum angle between any two vectors in this polyhedron. Magnitudes of these features, however, can alter with greater freedom.}
	\label{fig:lsh}
\end{figure}

Figure~\ref{fig:lsh} shows an illustration for $\Loss_{lsh}$. In our LSH module, each hash function can be considered as a linear constraint. Many constraints will divide the feature space into a lot of polyhedra, and in general each polyhedron will be small. The LSH loss will encourage $\vf_s$ and $\vf_t$ to fall into the same polyhedron. Hence, more hash functions will result in smaller polyhedra, which in turn means that the angle between $\vf_t$ and $\vf_s$ will be small (upper bounded by $\theta$ in Figure~\ref{fig:lsh}, which is small itself because the polyhedron is small compared to the feature magnitudes.) In short, the LSH module encourages $\vf_t$ and $\vf_s$ to have similar directions, but relaxes constraints on their magnitudes. 

LSH aims at hashing the points into bins by several hash functions to ensure that, for each function, near points will fall into the same bin with high probability. In our framework, we use the hash family based on the Gaussian distribution which is a 2-stable distribution, defined as
\begin{equation}
	\label{eq:hash-func}
	h_{\vw, b}(\vf) = \left\lfloor \frac{\vw\tran\vf+b}{r}\right\rfloor\,,
\end{equation}
where $\vf\in\R^{D}$ is the feature, $\vw\in\R^{D}$ is a random vector whose entries are sampled from a Guassian distribution, $b$ is a real number chosen uniformly from the range $[0, r]$, $r$ is the length of each bin, and $\lfloor \cdot \rfloor$ is the floor function.

Our loss term $\Loss_{lsh}$ encourages the student feature to fall into the same bin as that of the teacher's. According to the theory of locality-sensitive hashing, for two vectors $\vf_1$, $\vf_2$, the probability of collision decreases monotonically with the distance between $\vf_1$ and $\vf_2$. Therefore, $h_{\mW, b}(\vf_t)=h_{\mW, b}(\vf_s)$ (which will result in a low value of $\Loss_{lsh}$) is a necessary condition for $\| \vf_t - \vf_s\|_2=0$. Hence, it is reasonable to force the student to mimic the teacher by minimizing $\Loss_{lsh}$. 

In our framework, we use $N$ hash functions with the form in Equation~\ref{eq:hash-func}. The locality-sensitive hashing module will generate $N$ hash codes for each feature. $0$ is used as the threshold to chop the real line. Therefore, the LSH module can be implemented by a FC layer and a signum function:
\begin{equation}
	\label{eq:hash-func-t}
	h_{\mW, \vb}(\vf) = \sign(\mW\tran\vf+\vb)\,,
\end{equation}
\begin{equation}
	\sign(x) = \begin{cases}
		1, & \text{if } x > 0;\\
		0, & \text{otherwise}\,,
	\end{cases}
\end{equation}
in which $\vf\in\R^{D}$ is the feature, $\mW\in\R^{D\times N}$ is the weights whose entries are sampled from a Guassian distribution and $\vb\in\R^{N}$ is the bias. Equation~\ref{eq:hash-func-t} generates $N$ binary codes for teacher feature $\vf_t$, and the hash code for student feature is expected to be the same as teacher's. We \emph{enforce this requirement by learning a classification problem}, and the binary cross entropy loss is to be minimized, i.e.,
\begin{equation}
	\mathbf{h} = \sign\left(\mW\tran\vf_t+\vb\right)\,,
\end{equation}
\begin{equation}
	\mathbf{p} = \sigma\left(\mW\tran\vf_s+\vb\right)\,,
\end{equation}
\begin{equation}
	\Loss_{lsh} = -\frac{1}{nN}\sum_{i=1}^{n}\sum_{j=1}^{N}\left[h_j\log p_j+(1-h_j)\log(1-p_j)\right]\,,
\end{equation}
where $\sigma$ is the sigmoid function $\sigma(x)=\frac{1}{1+\exp (-x)}$, $h_j$ and $p_j$ are the $j$-th entry of $\mathbf{h}$ and $\mathbf{p}$, respectively. 

Finally, inspired by \cite{supervised_kd}, we only distill the features which the teacher classifies correctly. To reduce the effect of randomness in the locality-sensitive hashing module, the average of the last 10 epochs' models during training is used as our final model.

\subsection{Experimental analysis}
\label{sec:analysis}

\begin{table}
	\caption{The difference between teacher features and student features. The  statistics were  estimated average values on the training and testing sets of CIFAR-100. $\|\vf_t\|_2$ and $\|\vf_s\|_2$ denote the 2-norm of teacher and student features, respectively. $\theta$ represents the average angle between them. Acc@1 is the accuracy (\%) of the student model.}
	\label{tab:CIFAR-100-only-feat}
	\centering
	\setlength{\tabcolsep}{2pt}
	\begin{tabular}{c|c|SS|SS}
		\toprule
		& Teacher & \multicolumn{2}{c}{vgg13} & \multicolumn{2}{c}{ResNet50}  \\
		& Student & \multicolumn{2}{c}{vgg8} & \multicolumn{2}{c}{MobileNetV2}  \\
		& dataset & {train} & {test} & {train} & {test} \\
		\midrule
		& $\|\vf_t\|_2$  & 12.64 & 11.83 & 15.52 & 15.03  \\
		\midrule
		\multirow{3}*{CE}            
		& $\|\vf_s\|_2$	& 16.50 & 16.16 & 16.64 & 16.33 \\
		& $\theta$ 					& 69.49\si{\degree} & 68.00\si{\degree} & 90.09\si{\degree} & 90.07\si{\degree} \\
		& Acc@1						& 99.19 & 70.72 & 90.31 & 64.36 \\
		\midrule
		\multirow{3}*{$\ell_2$ + CE}    
		& $\|\vf_s\|_2$	& 12.53 & 12.06 & 13.82 & 13.59 \\
		& $\theta$ 					& 26.86\si{\degree} & 29.90\si{\degree} & 32.04\si{\degree} & 33.25\si{\degree} \\
		& Acc@1 					& 98.26 & 72.33 & 89.07 & 65.73 \\
		\midrule
		\multirow{3}*{LSH + CE}   
		& $\|\vf_s\|_2$	& 24.74 & 23.73 & 9.69 & 9.60 \\
		& $\theta$      			& 28.22\si{\degree} & 31.00\si{\degree} & 31.28\si{\degree} & 32.29\si{\degree}  \\
		& Acc@1 					& 97.60 & 72.69 & 84.11 & 67.02  \\
		\midrule
		\multirow{3}*{LSH+$\ell_2$+CE} 
		& $\|\vf_s\|_2$	& 15.22 & 14.50 &  9.94 & 9.83  \\
		& $\theta$  				& 25.43\si{\degree} & 28.99\si{\degree} & 29.73\si{\degree} & 30.80\si{\degree} \\
		& Acc@1 					& 97.72 & 73.68 &  85.76 & 68.99  \\
		\bottomrule
	\end{tabular}
\end{table}

We use experiments to demonstrate the advantage of giving the student more freedom to the feature magnitude and making it focus on mimicking the feature direction. 

Table~\ref{tab:CIFAR-100-only-feat} shows the experimental results. The models vgg13 and vgg8 share similar architectures, while ResNet50 and MobileNetV2 have different architectures. ``CE'' denotes training the student by only the cross entropy loss without a teacher. We find that $\|\vf_s\|_2$ is very different from $\|\vf_t\|_2$. More statistics on $\|\vf\|_2$ of different models can be found in Table~\ref{tab:arch}. When knowledge distillation is not used, teacher and student features have very different directions as there are large angles between them, especially when their architectures are different.

When the $\ell_2$ loss ($\ell_2$ + CE) is used for feature mimicking, the student features are encouraged to be similar to the teacher features in \emph{both magnitudes and angles}, and the student accuracy is higher. 

The proposed LSH loss \emph{gives the student more freedom to its feature magnitude}. With the LSH loss (LSH + CE), vgg8 gets a larger feature magnitude while MobileNetV2 gets a smaller feature magnitude than that of CE. For vgg8, although $\theta$ of LSH + CE is a little larger than that of $\ell_2$ + CE, the accuracy of LSH + CE is higher, which shows the benefit of giving more freedom to the feature magnitude. For MobileNetV2, LSH + CE achieves both a smaller $\theta$ and better performance.

Finally, the LSH loss and the $\ell_2$ loss can be combined to help each other, and result in both smaller $\theta$ (i.e., similar directions) and better accuracy rates.

In Section~\ref{sec:theory}, we will analyze the LSH module theoretically. 

\subsection{Ensemble all losses}
\label{sec:Ensemble all loss}

The final loss consists of two terms, the classification and the feature mimicking losses. The regular cross-entropy loss $\Loss_c$ is used as the classification loss. We use both $\Loss_{mse}$ and $\Loss_{lsh}$ as the feature mimicking loss. Different from CRD~\cite{CRD} and SSKD~\cite{SSKD}, our method does \emph{not} need the knowledge distillation loss~\cite{KD} (KL-divergence between teacher and student logits with temperature). The final loss is 
\begin{equation}
	\label{eq:final_loss}
	\Loss=\Loss_c + \beta (\Loss_{mse} + \Loss_{lsh})\,,
\end{equation}
where $\beta$ is the balancing weight. Therefore, if the mean square loss is already used in other researches (e.g., detection, segmentation), our LSH module can be added directly without introducing extra hyperparameter. 

\subsection{Model initialization}
\label{sec:Model initialization}

The LSH module needs to be initialized before the end-to-end training. In the LSH module, the entries of $\mW$ are sampled from a Guassian distribution. We always set $0$ as its mean and treat the standard deviation ($std_{hash}$) as a hyperparameter. To find a good default value for $std_{hash}$, we collect statistics about the standard deviation ($std$) of the final classifier's weight ($\mathbf{W'}$) with vanilla training (cf. Table~\ref{tab:arch}). Assume $\mathbf{W'}=[\mathbf{w'}_1, \mathbf{w'}_2, \cdots, \mathbf{w'}_c]^{\mathsf{T}}$, where $\mathbf{w'}_i\in\R^{D}$ and $c$ is the number of categories, the expectation of $\|\mathbf{w'}\|_2$ can be roughly calculated by
\begin{equation}
	E(\|\mathbf{w'}\|_2)=E(\sqrt{\mathbf{w'}\tran\mathbf{w'}})\approx std\times\sqrt{D}\,,
\end{equation}
where the last transition holds because we noticed that the mean of $\mathbf{W'}$ is roughly zero. There is a tendency that $E(\|\mathbf{w'}\|_2)$ does not change drastically, and $std$ will become small when $D$ is large. These phenomena inspire us to choose $std_{hash}$ according to $D$. We also find that directly using the $std$ of teacher's final classifier's weight is a good default value for $std_{hash}$.

\begin{figure}
	\centering 
	\subfloat[]{\label{fig:sub1}\includegraphics[width=0.3\linewidth]{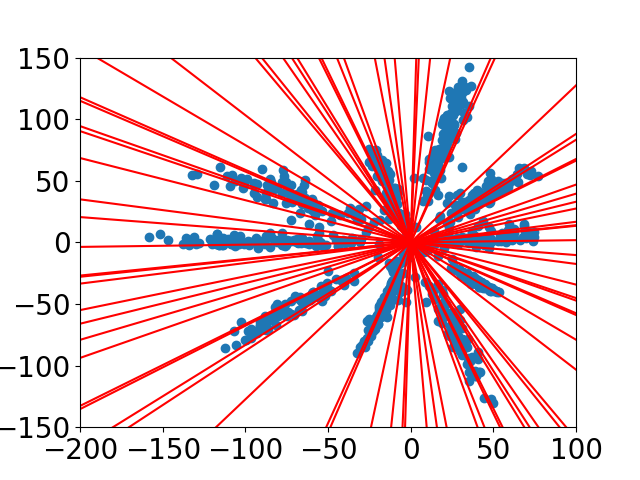}}
	\subfloat[]{\label{fig:sub2}\includegraphics[width=0.3\linewidth]{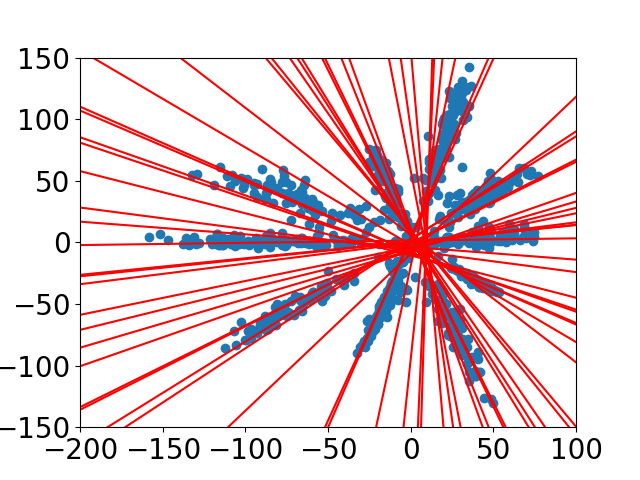}}
	\subfloat[]{\label{fig:sub3}\includegraphics[width=0.3\linewidth]{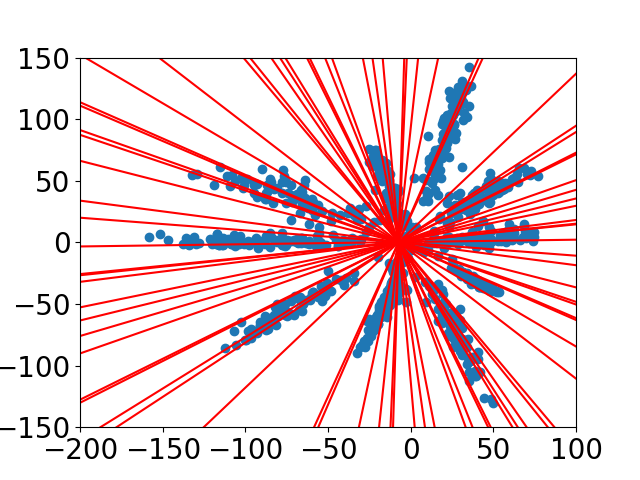}}\\ 
	\caption{Illustration of different initialization for the bias. Red lines denote the hash function constraints, while blue points represent teacher features. \protect\subref{fig:sub1}, \protect\subref{fig:sub2}, and \protect\subref{fig:sub3} show the bias initialized by $\mathbf{0}$, the median, or the mean of the teacher hash values, respectively. This figure is best viewed in color and zoomed in.}
	\label{fig:bias} 
\end{figure}

$\vb$ is the bias in the LSH module. As shown in Figure~\ref{fig:bias}, the bias can be initialized by  $\mathbf{0}$, the median, or the mean of the teacher hash values. Because BCE loss is applied, to make the binary classification problem balanced, we use the median of the teacher hash values as the bias in our LSH module. We also tried to use the mean of the teacher hash values or simply set $\vb=\mathbf{0}$. Later we will exhibit in Table~\ref{tab:CIFAR-100-same-arch-bias} and Table~\ref{tab:CIFAR-100-diff-arch-bias} the experimental results when using different initialization for the bias. These results show that our method is not sensitive to the initialization of $\vb$.

\section{Theoretical Analyses}
\label{sec:theory}

Now we will analyze why the LSH loss is sensitive to the feature's direction but not to the feature's magnitude. First, the following Claim~\ref{claim:1} says that if the teacher features are scaled, $\Loss_{lsh}$ will not change, i.e., $\Loss_{lsh}$ is not sensitive to the teacher feature's magnitude.

\begin{claim}
	\label{claim:1}
	For a given scale $s>0$, $\Loss_{lsh}(s\vf_t, \vf_s)=\Loss_{lsh}(\vf_t, \vf_s)$ for arbitrary $\vf_s$.
\end{claim}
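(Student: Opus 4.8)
The plan is to exploit the fact that $\Loss_{lsh}$ depends on the teacher feature $\vf_t$ in a very restricted way. Reading off the loss, the only place $\vf_t$ enters is through the target hash code $\mathbf{h} = \sign(\mW\tran\vf_t + \vb)$; the predicted probabilities $\mathbf{p} = \sigma(\mW\tran\vf_s + \vb)$ are built solely from the student feature $\vf_s$ and the frozen LSH parameters $\mW, \vb$, and are therefore literally unchanged when we replace $\vf_t$ by $s\vf_t$. So the whole statement reduces to showing that the target code is invariant under positive scaling, i.e. $\sign(\mW\tran(s\vf_t) + \vb) = \sign(\mW\tran\vf_t + \vb)$. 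Once the $h_j$'s are shown to be unchanged, each per-sample, per-bit summand $h_j\log p_j + (1 - h_j)\log(1 - p_j)$ coincides termwise, and the double sum over the $n$ samples and $N$ bits is identical, giving $\Loss_{lsh}(s\vf_t, \vf_s) = \Loss_{lsh}(\vf_t, \vf_s)$.

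First I would reduce the code invariance to a coordinatewise statement: the $j$-th bit is $h_j = \sign\bigl((\mW\tran\vf_t)_j + b_j\bigr)$, so it suffices to check, for each $j$, that $\sign\bigl(s\,(\mW\tran\vf_t)_j + b_j\bigr) = \sign\bigl((\mW\tran\vf_t)_j + b_j\bigr)$. In the origin-centered regime $\vb = \mathbf{0}$ — which is precisely the angular, SimHash-style LSH whose direction sensitivity the rest of the section relies on — this is immediate, since $\sign(s\,a) = \sign(a)$ for every real $a$ whenever $s > 0$ (multiplying by a positive scalar preserves the sign of a real number and fixes $0$). Hence every bit, and therefore the whole code $\mathbf{h}$, is invariant, which closes the argument by the reduction above.

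The hard part, and the place I would be most careful, is the offset $\vb$: positive scaling preserves the outcome of a hyperplane test only when the separating hyperplane passes through the origin. With a nonzero $b_j$, the threshold test $(\mW\tran\vf_t)_j > -b_j$ is genuinely scale dependent, so the clean equality in the claim is really a statement about the zero-offset hash family. I would therefore carry out the proof in the $\vb = \mathbf{0}$ setting, the angular-LSH formulation underlying the direction-versus-magnitude decomposition, where the result holds exactly; alternatively, if one regards the offset as determined by the teacher statistics (e.g. a coordinatewise median of the projected teacher values, so that scaling $\vf_t$ by $s$ scales $\vb$ by $s$ as well), then $\sign\bigl(s\,(\mW\tran\vf_t)_j + s\,b_j\bigr) = \sign\bigl((\mW\tran\vf_t)_j + b_j\bigr)$ and the invariance extends to nonzero bias. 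The remaining manipulations — commuting the scalar past the sign and re-expanding the identical double sum — are purely mechanical.
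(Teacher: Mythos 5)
Your proof is correct and its core is the same as the paper's: both reduce the claim to the invariance of each sign bit $\sign(\vw_j\tran(s\vf_t)+b_j)$ under positive scaling, using $\sign(sa)=\sign(a)$ for $s>0$. The difference lies in how the bias is handled. You prove the clean case $\vb=\mathbf{0}$ and correctly flag that a fixed nonzero $\vb$ genuinely breaks the claim; the paper instead takes exactly the route you offer only as an alternative, namely that $b_j$ is the median of the teacher hash values $\vw_j\tran\vf_i$, so that scaling all teacher features by $s>0$ rescales the bias to $sb_j$ and $\sign(s\,\vw_j\tran\vf_i+sb_j)=\sign(\vw_j\tran\vf_i+b_j)$. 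Your version is arguably the more careful one, since it makes explicit that the statement as written is really about the zero-offset (or teacher-covariant-offset) hash family. One wrinkle worth noting about the paper's reading, which your frozen-$\vb$ setup avoids: if $\vb$ is re-derived as $s\vb$, then $\mathbf{p}=\sigma(\mW\tran\vf_s+\vb)$ changes as well, so showing that the code $\mathbf{h}$ is unchanged does not by itself show the loss value is unchanged; this issue disappears exactly when $\vb=\mathbf{0}$, which is the setting all the subsequent claims assume anyway.
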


Next, the following Claim~\ref{claim:2} states that when $\vf_s$ and $\vf_t$ have the same direction, $\Loss_{lsh}$ will encourage $\vf_s$ to be longer. 

\begin{claim}
	\label{claim:2}
	Assume the direction of $\vf_s$ is the same as that of $\vf_t$, and $\vb=\mathbf{0}$ in LSH. For a given scale $s>1$, then $\Loss_{lsh}(\vf_t, s\vf_s)\leq \Loss_{lsh}(\vf_t, \vf_s)$ always holds. 
\end{claim}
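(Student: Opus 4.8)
The plan is to reduce the claim to a term-by-term (per hash function) comparison and then exploit the monotonicity of the sigmoid. Write $\vw_j$ for the $j$-th column of $\mW$, so the $j$-th pre-activation of the student is $z_j:=\vw_j\tran\vf_s$ and that of the teacher is $\vw_j\tran\vf_t$. Because $\vf_s$ and $\vf_t$ share a direction, there is a constant $c>0$ with $\vf_t=c\vf_s$, hence $\vw_j\tran\vf_t=c\,z_j$; combined with $\vb=\mathbf{0}$, the teacher hash bit satisfies $h_j=\sign(\vw_j\tran\vf_t)=\sign(z_j)$. The key consequence is that the BCE target $h_j$ is exactly consistent with the sign of the student's own pre-activation $z_j$: $h_j=1$ forces $z_j>0$, while $h_j=0$ forces $z_j\le 0$.

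First I would isolate a single summand. For a common scale $t>0$ define
\begin{equation}
	\ell_j(t)=-\bigl[h_j\log\sigma(t z_j)+(1-h_j)\log\bigl(1-\sigma(t z_j)\bigr)\bigr]\,,
\end{equation}
so that $\Loss_{lsh}(\vf_t,t\vf_s)=\frac{1}{N}\sum_{j=1}^{N}\ell_j(t)$ as a function of $t$ (the overall constant does not affect the inequality). It then suffices to show $\ell_j(s)\le\ell_j(1)$ for every $j$ whenever $s>1$; in fact I would prove the stronger statement that each $\ell_j$ is non-increasing on $(0,\infty)$.

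Next I would split on the value of $h_j$ and invoke the sign agreement. If $h_j=1$ then $z_j>0$ and $\ell_j(t)=-\log\sigma(t z_j)$; since $\sigma$ is strictly increasing and $t z_j$ grows with $t$, $\ell_j$ is decreasing. If $h_j=0$ then $z_j\le 0$, and using $1-\sigma(x)=\sigma(-x)$ we have $\ell_j(t)=-\log\sigma(-t z_j)$ with $-z_j\ge 0$, so $-t z_j$ is non-decreasing in $t$ and $\ell_j$ is non-increasing (constant, equal to $\log 2$, in the degenerate case $z_j=0$). Summing the inequalities $\ell_j(s)\le\ell_j(1)$ over $j$ yields $\Loss_{lsh}(\vf_t,s\vf_s)\le\Loss_{lsh}(\vf_t,\vf_s)$.

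The main obstacle is the bookkeeping that secures the sign agreement $h_j=\sign(z_j)$, which is exactly where the two hypotheses enter and should be verified to be genuinely necessary: dropping $\vb=\mathbf{0}$ shifts the decision threshold away from the origin, so scaling up can overshoot and \emph{increase} the loss, and dropping the common-direction assumption breaks $\sign(\vw_j\tran\vf_t)=\sign(z_j)$. Once that agreement is established the remainder is elementary monotonicity of $-\log\sigma$, so I expect no further difficulty beyond handling the boundary coordinate $z_j=0$, which contributes an unchanged $\log 2$ and is harmless.
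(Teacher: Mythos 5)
Your proposal is correct and follows essentially the same route as the paper's own proof: split the loss per hash function, use the common-direction and zero-bias assumptions to conclude that the teacher bit $h_j$ agrees with the sign of the student pre-activation $\vw_j\tran\vf_s$, and then invoke monotonicity of $-\log\sigma$ under the scaling $s>1$ in each of the two cases $h_j=1$ and $h_j=0$. The only difference is cosmetic (you work directly with $z_j=\vw_j\tran\vf_s$ where the paper writes $\|\vw_j\|\|\vf_s\|\cos\langle\vw_j,\vf_s\rangle$), and your explicit handling of the boundary case $z_j=0$ is a small point the paper glosses over.
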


Finally, the following Claim~\ref{claim:lsh_p} and Claim~\ref{claim:degree_p} are the most important conclusions, which explain why our LSH loss can help the student to mimic the direction of teacher features. Claim~\ref{claim:lsh_p} computes the probability of the LSH loss being small (less than $\log{2}$) when we are given $\anglefn{{(\vf_t, \vf_s)}}$, the angle between teacher and student features. Hence, if the angle between $\vf_s$ and $\vf_t$ is smaller, the LSH loss will become small with higher probability.

Claim~\ref{claim:degree_p} gives the probability of $\anglefn{{(\vf_t, \vf_s)}} < \epsilon$ under the constraint that $\Loss_{lsh}$ is small. Using the probability formula in Claim~\ref{claim:degree_p}, we can numerically calculate the cumulative probability of the angle when $\Loss_{lsh}$ meets the condition (cf. Figure~\ref{fig:lsh_angle}). From this figure, we can conclude that if more hashing functions are used, the direction of $\vf_s$ will approach that of $\vf_t$ with higher probability.

\begin{claim}
	\label{claim:lsh_p}
	Suppose $\vb=\mathbf{0}$ in LSH, and $\vf_s$ and $\vf_t$ follow the standard Gaussian distribution. Then,
	\begin{equation}
		\prob{l_j < \log{2} \mid \anglefn{{(\vf_t, \vf_s)}} = \theta} = 1 - \frac{\theta}{\pi}
	\end{equation}
	will hold, where $\anglefn{{(\vf_t, \vf_s)}}$ denotes the angle between $\vf_t$ and $\vf_s$, and 
	\begin{equation}
		l_j \doteq -h_j\log{\left(p_j\right)} - \left(1 - h_j\right)\log{\left(1 - p_j\right)}\,.
	\end{equation}
\end{claim}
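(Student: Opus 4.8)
The plan is to first reduce the analytic event $\{l_j < \log 2\}$ to a purely geometric one, and then evaluate its probability with the classical random-hyperplane (SimHash) collision argument. Write $\vw_j$ for the $j$-th column of $\mW$, so that with $\vb = \vzero$ we have $h_j = \sign(\vw_j\tran\vf_t)$ and $p_j = \sigma(\vw_j\tran\vf_s)$. Since $l_j$ is the binary cross-entropy of predicting the label $h_j$ with confidence $p_j$, and the sigmoid satisfies $\sigma(x) > \tfrac12 \iff x > 0$, I would split on the value of $h_j$. When $h_j = 1$ (i.e.\ $\vw_j\tran\vf_t > 0$), $l_j = -\log p_j$, so $l_j < \log 2 \iff p_j > \tfrac12 \iff \vw_j\tran\vf_s > 0$; when $h_j = 0$ (i.e.\ $\vw_j\tran\vf_t \le 0$), $l_j = -\log(1 - p_j)$, so $l_j < \log 2 \iff p_j < \tfrac12 \iff \vw_j\tran\vf_s < 0$. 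Hence, outside the probability-zero set $\{\vw_j\tran\vf_t = 0\}\cup\{\vw_j\tran\vf_s = 0\}$, the event $l_j < \log 2$ is exactly $\sign(\vw_j\tran\vf_t) = \sign(\vw_j\tran\vf_s)$, i.e.\ $\vf_t$ and $\vf_s$ lie on the same side of the random hyperplane $\{\vx : \vw_j\tran\vx = 0\}$. The value $\log 2$ is precisely $l_j$ at $p_j = \tfrac12$, which is what makes this threshold so clean.

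Next I would compute the probability of this collision event over the random Gaussian vector $\vw_j$. Conditioning on $\anglefn(\vf_t, \vf_s) = \theta$ fixes the two-dimensional plane $P$ spanned by $\vf_t$ and $\vf_s$, and only the projection of $\vw_j$ onto $P$ determines both signs. Because the entries of $\vw_j$ are i.i.d.\ Gaussian, $\vw_j$ is rotationally invariant, so its projection onto $P$ points in a direction $\phi$ distributed uniformly on $[0, 2\pi)$. The two signs disagree exactly when $\phi$ falls into the union of two antipodal wedges, each of angle $\theta$, that separate the half-plane $\{\vw_j\tran\vf_t > 0\}$ from $\{\vw_j\tran\vf_s > 0\}$, for a total angular measure of $2\theta$. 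Hence the disagreement probability is $2\theta/(2\pi) = \theta/\pi$ and the collision probability is $1 - \theta/\pi$, as claimed. Equivalently, writing $X = \vw_j\tran\vf_t$ and $Y = \vw_j\tran\vf_s$, one notes that $(X, Y)$ is a centered bivariate Gaussian with correlation $\cos\theta$ and applies the orthant identity $\prob{XY > 0} = \tfrac12 + \tfrac1\pi\arcsin(\cos\theta) = 1 - \theta/\pi$.

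I expect the main obstacle to be conceptual rather than computational, and to lie entirely in the first reduction: one must verify that the asymmetric definition of $\sign$ (which sends $0$ to $0$, not to a symmetric value) together with the one-sided form of the cross-entropy does not spoil the clean equivalence with the same-side event. This is handled by discarding the measure-zero boundary sets $\{\vw_j\tran\vf_t = 0\}$ and $\{\vw_j\tran\vf_s = 0\}$, which carry no probability under a non-degenerate Gaussian. A secondary point worth stating explicitly is that the probability is taken over the random hash vector $\vw_j$ with the angle $\theta$ held fixed; the rotational invariance of the standard Gaussian is exactly what guarantees that the answer depends on $\vf_t$ and $\vf_s$ only through $\theta$, so the conditional Gaussian assumption on the features themselves plays no further role once $\theta$ is fixed.
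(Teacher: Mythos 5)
Your proposal is correct, and its first half coincides exactly with the paper's argument: the paper likewise observes that $\log 2$ is the cross-entropy at $p_j=\tfrac12$, splits on $h_j$, and reduces $l_j<\log 2$ to the sign-agreement event $\vw_j\tran\vf_s>0 \Leftrightarrow \vw_j\tran\vf_t>0$ (phrased there as $\dir{(\vw_j)}$ landing in a union of two lunes $\sL_{\vf_t,\vf_s}\subset\sphere{D}$). Where you diverge is in evaluating the probability of that event. The paper stays in the ambient dimension: it develops \Lemref{lem:unisp} and \Corref{cor:indicator} to show that $\dir{(\vw_j)}$ is uniform on $\sphere{D}$, then computes the surface measure of the two lunes by an explicit integral that decomposes $\sphere{D}$ into tori, obtaining $\sigma_{D-1}(\sL_{\vf_t,\vf_s})/A_{D-1}=1-\anglefn{(\vf_t,\vf_s)}/\pi$, and finally integrates over the conditional law of $(\vf_t,\vf_s)$ given the angle. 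You instead project $\vw_j$ onto the two-dimensional span of $\vf_t$ and $\vf_s$, invoke rotational invariance to get a uniform direction on the circle, and read off the disagreement probability as two antipodal wedges of total angle $2\theta$ (or, equivalently, the bivariate-Gaussian orthant formula with correlation $\cos\theta$). This is the classical SimHash/Goemans--Williamson argument; it is more elementary and sidesteps the measure-theoretic machinery entirely. What the paper's heavier route buys is reuse: \Lemref{lem:unisp}, \Corref{cor:indicator}, and the resulting surface-area calculus are needed anyway for \Lemref{lem:angpdf} (the density $\propto\sin^{D-2}\theta$ of the angle) on which Claim~\ref{claim:degree_p} rests, so the lune computation comes almost for free once that infrastructure exists. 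Your handling of the measure-zero boundary sets and of the asymmetric $\sign$ convention is sound, and your closing remark — that the Gaussian assumption on the features matters only through making the answer depend on $\theta$ alone — matches the paper's final integration step.
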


\begin{claim}
	\label{claim:degree_p}
	Suppose $\vb=\mathbf{0}$ in LSH, and $\vf_s$ and $\vf_t$ follow the standard Gaussian distribution.  Then, for any $0 < \epsilon < \pi$, the equation 
	\begin{align}
		&\prob{\anglefn{{(\vf_t, \vf_s)}} < \epsilon \mid \bigwedge_{j = 1}^{N}{\left(l_j < \log{2}\right)}} \notag \\
		=&\frac{\int_{0}^{\epsilon}{{\left(\left(1 - \frac{\theta}{\pi}\right)^{N} \cdot \sin^{D-2}{(\theta)} \right)} \,\mathrm{d}\theta}}{\int_{0}^{\pi}{{\left(\left(1 - \frac{\theta}{\pi}\right)^{N} \cdot \sin^{D-2}{(\theta)} \right)} \,\mathrm{d}\theta}}
	\end{align}
	will hold.
\end{claim}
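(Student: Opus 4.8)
The plan is to treat Claim~\ref{claim:degree_p} as a Bayesian inversion of the single-hash probability established in Claim~\ref{claim:lsh_p}. Write $\Theta \doteq \anglefn{(\vf_t, \vf_s)}$ for the (random) angle and let $A \doteq \bigwedge_{j=1}^{N}(l_j < \log 2)$ denote the conditioning event. The target quantity is $\prob{\Theta < \epsilon \mid A}$, which by definition equals $\int_0^\epsilon p(\theta \mid A)\,\mathrm{d}\theta$. Hence it suffices to identify the posterior density $p(\theta \mid A)$ on $[0,\pi]$ and show it is proportional to $(1 - \theta/\pi)^N \sin^{D-2}(\theta)$; the claimed ratio of integrals then follows by integrating this density on $(0,\epsilon)$ and normalizing.

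First I would pin down the prior density of $\Theta$. Since $\vf_t$ and $\vf_s$ are independent standard Gaussians, each is rotationally invariant, so its direction $\vf/\normtwo{\vf}$ is uniform on $\sphere{D}$ and independent of the other. The angle between two independent uniform directions on $\sphere{D}$ has the standard density proportional to $\sin^{D-2}(\theta)$ for $\theta \in [0,\pi]$, which I would derive by integrating the uniform surface measure over the latitude band at angular distance $\theta$ from a fixed pole. Denoting the normalizing constant by $Z$, this gives $p(\theta) = \sin^{D-2}(\theta)/Z$.

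Next I would establish the likelihood $\prob{A \mid \Theta = \theta} = (1 - \theta/\pi)^N$. Conditioned on the realized pair $(\vf_t, \vf_s)$, the $N$ events $\{l_j < \log 2\}$ are mutually independent, because each depends only on its own hash vector $\vw_j$ (the $j$-th column of $\mW$) and these columns are drawn independently; moreover, by the rotational symmetry of the Gaussian $\vw_j$, each event's conditional probability depends on $(\vf_t,\vf_s)$ only through the angle $\theta$, and Claim~\ref{claim:lsh_p} evaluates it as $1 - \theta/\pi$. Multiplying the $N$ independent factors yields $\prob{A \mid \vf_t, \vf_s} = (1 - \theta/\pi)^N$, and since this is constant over all pairs sharing the angle $\theta$, the tower property gives $\prob{A \mid \Theta = \theta} = (1 - \theta/\pi)^N$. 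Applying Bayes' rule, $p(\theta \mid A) = \prob{A \mid \Theta = \theta}\,p(\theta)/\prob{A}$ with $\prob{A} = \int_0^\pi \prob{A \mid \Theta = \theta'}\,p(\theta')\,\mathrm{d}\theta'$, the constant $Z$ cancels between numerator and denominator, leaving exactly $p(\theta \mid A) = (1-\theta/\pi)^N \sin^{D-2}(\theta) / \int_0^\pi (1-\theta/\pi)^N \sin^{D-2}(\theta)\,\mathrm{d}\theta$, whose integral over $(0,\epsilon)$ is the stated formula.

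I expect the main obstacle to be the careful justification of the conditional-independence step and the likelihood $(1 - \theta/\pi)^N$: one must argue that the only statistical coupling among the $N$ collision events is the shared pair $(\vf_t, \vf_s)$, and that conditioning on the scalar $\Theta$ already captures all the dependence relevant to each event's probability, so that the product form survives marginalization over the directions at fixed angle. The angular prior $\propto \sin^{D-2}(\theta)$ is classical but also deserves a clean derivation; everything downstream of Bayes' rule is routine bookkeeping.
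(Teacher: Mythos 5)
Your proposal is correct and follows essentially the same route as the paper: derive the prior angle density proportional to $\sin^{D-2}(\theta)$ (the paper's \lemref{lem:angpdf}), obtain the likelihood $\left(1-\frac{\theta}{\pi}\right)^{N}$ from conditional independence of the $l_j$'s together with Claim~\ref{claim:lsh_p}, and finish with Bayes' rule and normalization. Your treatment of the independence step --- conditioning on $(\vf_t,\vf_s)$ first and then marginalizing at fixed angle via the tower property --- is a slightly more careful spelling-out of what the paper asserts in one line, but it is the same argument.
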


\begin{figure}[t]
	\centering
	\includegraphics[width=0.8\linewidth]{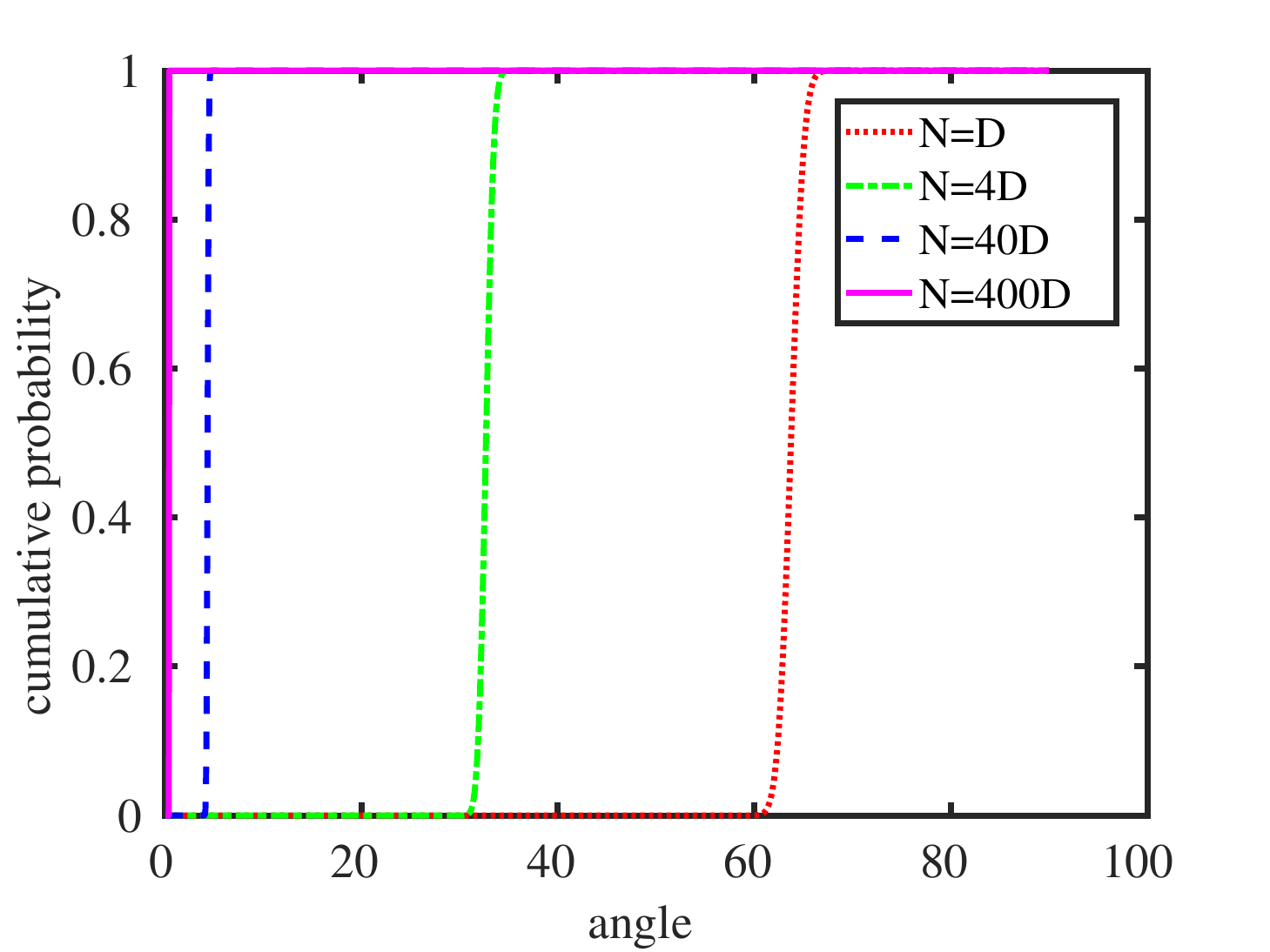}
	\caption{The cumulative probability of the angle with $D=2048$, where $D$ and $N$ denote the feature dimensionality and the number of hash functions, respectively. With $N$ becoming larger, the angle between the student feature and the teacher feature will become smaller with high probability. This figure is best viewed in color and zoomed in.}
	\label{fig:lsh_angle}
\end{figure}

The proof of Claim~\ref{claim:lsh_p} and Claim~\ref{claim:degree_p} are provided in the appendix to this paper. Utilizing these results, we numerically calculate the probability in Claim~\ref{claim:degree_p} for different $N$ values. As Figure~\ref{fig:lsh_angle} shows, when the number of hash function $N$ grows, the angle between teacher and student features indeed converge to 0. That is, our LSH loss is effective in mimicking the teacher feature's direction.

\section{Experiments}
\label{sec:experiments}

\begin{table}
	\caption{The networks used in our experiments. ResNet34 and ResNet18 were used on ImageNet, while other models were used on CIFAR-100. $D$ denotes the dimensionality of the feature before the final classifier. $std$ represents the standard deviation of the final classifier's weight with vanilla training. $\|\vf\|_2$ is the mean of the 2-norm of features in the training set. }
	\label{tab:arch}
	\centering
	\begin{tabular}{c|cccc}
		\toprule
		role & model & $D$ & $std$ & $\|\vf\|_2$\\
		\midrule
		\multirow{6}*{Teacher} & WRN-40-2 & 128 & 0.1713 & 13.64\\
		& resnet56 & 64 & 0.2415 & 18.08 \\
		& resnet110 & 64 & 0.2262 & 20.13 \\
		& resnet32x4 & 256 & 0.1100 & 12.06 \\
		& vgg13 & 512 & 0.0749 & 12.64 \\
		& ResNet50 & 2048 & 0.0381 & 15.52\\
		\midrule
		\multirow{9}*{Student} & WRN-16-2 & 128 & 0.2035 & 15.07\\
		& WRN-40-1 & 64 & 0.2638 & 14.29 \\
		& resnet20 & 64 & 0.2704 & 14.47 \\
		& resnet32 & 64 & 0.2563 & 16.06 \\
		& resnet8x4 & 256 & 0.1573 & 16.92 \\
		& vgg8 & 512 & 0.0980 & 16.50 \\
		& MobileNetV2 & 640 & 0.0579 & 15.82\\
		& ShuffleNetV1 & 960 & 0.0642 & 17.26\\
		& ShuffleNetV2 & 1024 & 0.0625 & 15.70 \\
		\midrule
		Teacher & ResNet34 & 512 & 0.0640 & 30.75 \\
		Student & ResNet18 & 512 & 0.0695 & 29.58 \\
		\bottomrule
	\end{tabular}
\end{table}

In this section, we evaluate the proposed feature mimicking framework on single-label classification, multi-label recognition, and object detection. For single-label classification, we use the CIFAR-100~\cite{cifar} and  ImageNet~\cite{imagenet} datasets, which are usually used as benchmarks for knowledge distillation. CIFAR-100 contains $32\times 32$ natural images from 100 categories, which contains 50000 training images and 10000 testing images. ImageNet is a large-scale dataset with natural color images from 1000 categories. Each category typically has 1300 images for training and 50 for evaluation. 

For CIFAR-100, we used the code provided by CRD~\cite{CRD}.\footnote {\url{https://github.com/HobbitLong/RepDistiller}} For a fair comparison, we used the same hyperparameters of CRD in our experiments, such as the learning rate, batch size and epoch. For ImageNet, we followed the standard PyTorch example code and trained 100 epochs (following CRD).\footnote{\url{https://github.com/pytorch/examples/tree/master/imagenet}}

\subsection{Ablation studies}

We first study the  effects of the loss functions, hyperparameters in LSH, and model initialization.

\subsubsection{The loss functions}

First, we conduct ablation studies on the loss functions. Our final loss contains $\Loss_{mse}$ and $\Loss_{lsh}$. We will use only one of them to see their individual effects. 

\begin{table*}
	\caption{Test accuracy (\%) of the student network on CIFAR-100. The teacher and the student share similar architectures.}
	\label{tab:CIFAR-100-same-arch-lshloss}
	\centering
	\begin{tabular}{c|ccccccc}
		\toprule
		Teacher & WRN-40-2 & WRN-40-2 & resnet56 & resnet110 & resnet110 & resnet32x4 & vgg13 \\
		Student & WRN-16-2 & WRN-40-1 & resnet20 & resnet20 & resnet32 & resnet8x4 & vgg8 \\
		\midrule
		Teacher & 75.61 & 75.61 & 72.34 & 74.31 & 74.31 & 79.42 & 74.64 \\
		Student & 73.26 & 71.98 & 69.06 & 69.06 & 71.14 & 72.50 & 70.36 \\
		\midrule
		KD              & $74.92\,(71\%\uparrow)$ & $73.54\,(43\%\uparrow)$ & $70.66\,(49\%\uparrow)$ & $70.67\,(31\%\uparrow)$ & $73.08\,(61\%\uparrow)$ & $73.33\,(12\%\uparrow)$ & $72.98\,(61\%\uparrow)$ \\
		$\ell_2$ loss & $75.53\,(97\%\uparrow)$ & $74.33\,(65\%\uparrow)$ & $71.42\,(72\%\uparrow)$ & $71.30\,(43\%\uparrow)$ & $73.81\,(84\%\uparrow)$ & $74.01\,(22\%\uparrow)$ & $72.33\,(46\%\uparrow)$ \\
		LSH loss & $75.61\,(100\%\uparrow)$ & $74.20\,(61\%\uparrow)$ & $71.51\,(75\%\uparrow)$ & $71.73\,(51\%\uparrow)$ & $73.69\,(80\%\uparrow)$ & $73.49\,(14\%\uparrow)$ & $72.69\,(54\%\uparrow)$ \\
		$\ell_2$ loss + LSH loss & $75.62\,(100\%\uparrow)$ & $74.54\,(71\%\uparrow)$ & $71.65\,(79\%\uparrow)$ & $71.39\,(44\%\uparrow)$ & $73.99\,(90\%\uparrow)$ & $73.37\,(13\%\uparrow)$ & $73.68\,(78\%\uparrow)$ \\
		\bottomrule
	\end{tabular}
\end{table*}

\begin{table*}[!htb] 
	\caption{Test accuracy (\%) of the student network on CIFAR-100. The teacher and the student use different architectures. }
	\label{tab:CIFAR-100-diff-arch-lshloss}
	\centering
	\setlength{\tabcolsep}{4pt}
	\begin{tabular}{c|cccccc}
		\toprule
		Teacher & vgg13 & ResNet50 & ResNet50 & resnet32x4 & resnet32x4 & WRN-40-2 \\
		Student & MobileNetV2 & MobileNetV2 & vgg8 & ShuffleNetV1 & ShuffleNetV2 & ShuffleNetV1 \\
		\midrule
		Teacher & 74.64 & 79.34 & 79.34 & 79.42 & 79.42 & 75.61 \\
		Student & 64.60 & 64.60 & 70.36 & 70.50 & 71.82 & 70.50 \\
		\midrule
		KD           & $67.37\,(28\%\uparrow)$ & $67.35\,(19\%\uparrow)$ & $73.81\,(38\%\uparrow)$ & $74.07\,(40\%\uparrow)$ & $74.45\,(35\%\uparrow)$ & $74.83\,(85\%\uparrow)$ \\
		$\ell_2$ loss & $66.98\,(24\%\uparrow)$ & $65.73\,(8\%\uparrow)$ & $71.90\,(17\%\uparrow)$ & $74.65\,(47\%\uparrow)$ & $75.73\,(51\%\uparrow)$ & $75.37\,(95\%\uparrow)$ \\
		LSH loss 	& $67.48\,(29\%\uparrow)$ & $67.02\,(16\%\uparrow)$ & $74.15\,(42\%\uparrow)$ & $75.49\,(56\%\uparrow)$ & $75.56\,(49\%\uparrow)$ & $75.89\,(105\%\uparrow)$ \\
		$\ell_2$ loss + LSH loss & $67.16\,(25\%\uparrow)$ & $68.99\,(30\%\uparrow)$ & $74.89\,(50\%\uparrow)$ & $75.36\,(54\%\uparrow)$ & $76.70\,(64\%\uparrow)$ & $76.25\,(113\%\uparrow)$ \\
		\bottomrule
	\end{tabular}
\end{table*}

Table~\ref{tab:CIFAR-100-same-arch-lshloss} and Table~\ref{tab:CIFAR-100-diff-arch-lshloss} summarize the results. We used ``KD''~\cite{KD} as the baseline method. Note that all experiments used the classification loss $\Loss_{c}$. ``$\ell_2$ loss'' denotes only using $\Loss_{mse}$, while ``LSH loss'' represent only using $\Loss_{lsh}$. ``$\ell_2$ loss + LSH loss'' combines the $\Loss_{mse}$ and $\Loss_{lsh}$ as in Equation~\ref{eq:final_loss}. To balance the feature mimicking loss and classification loss, $\beta$ was set as $6$. In these tables, we also show the relative improvement as a percentage. Accuracy of the student and the teacher are treated as 0\% and 100\%, respectively. For example, in the last column of Table~\ref{tab:CIFAR-100-diff-arch-lshloss}, the student and teacher accuracy are 70.50 and 75.61, while the proposed ``$\ell_2$ loss + LSH loss'' is 76.25, hence the relative improvement is $\frac{76.25-70.50}{75.61-70.50}=113\%$.

When the teacher and student share similar architectures, only using the $\ell_2$ loss can surpass the standard KD significantly, which demonstrates the advantage of feature mimicking for knowledge distillation. When only applying the LSH loss we proposed, the performance of most teacher/student combinations are better than that of the $\ell_2$ loss, showing the benefit of giving the student more freedom to the feature magnitude and letting it focus on mimicking the feature direction. Combining $\ell_2$ and LSH losses can boost the performance. We believe it is because the LSH loss can alleviate the shortcomings of the $\ell_2$ loss, and the LSH loss can also benefit from the $\ell_2$ loss. 

When the teacher and student use different architectures, the difference of their accuracy is larger than that in the  similar-architecture settings, and their features are more different. Due to the limited capacity of student networks, it is difficult for the student to mimic both features' directions and magnitudes. The experimental results in Table~\ref{tab:CIFAR-100-diff-arch-lshloss} show that only using $\Loss_{lsh}$ outperforms using $\Loss_{mse}$ in most cases, which justifies that feature directions have more effective information to boost the student performance, and that we should make the student pay more attention to the feature direction. Combining $\ell_2$ and LSH losses is consistently better than only applying the $\ell_2$ loss. It demonstrates that feature mimicking indeed benefits from giving more freedom to the student feature's magnitude.

Furthermore, by comparing the relative improvement numbers in Table~\ref{tab:CIFAR-100-same-arch-lshloss} and Table~\ref{tab:CIFAR-100-diff-arch-lshloss}, it is obvious that knowledge distillation across different network architectures is a more challenging task than distilling between similar-architecture networks. Hence, it is not surprising that differences among the $\ell_2$ loss, the proposed LSH loss, and the ``$\ell_2$ loss + LSH loss'' are relatively small in Table~\ref{tab:CIFAR-100-same-arch-lshloss}. On the other hand, Table~\ref{tab:CIFAR-100-diff-arch-lshloss} confirms that the proposed LSH loss is supervisor to the $\ell_2$ loss, which also shows that the combination of these two are complementary in feature mimicking. For example, when we distill knowledge from ResNet50 to MobileNetV2, the combined relative improvement (30\%) is even higher than the sum of both (8\% + 16\%).

\subsubsection{Hyperparameters in the LSH module}

Next, we study the effect of hyperparameters in the LSH loss. There are three hyperparameters in locality-sensitive hashing. $N$ denotes the number of hashing functions. $std_{hash}$ represents the standard deviation of the Gaussian sampler. Note that we always use $0$ as the mean of the Gaussian sampler. $\beta$ is the balancing weight for both $\Loss_{lsh}$ and $\Loss_{mse}$. 

\begin{table*}
	\caption{Test accuracy (\%) of the student network on CIFAR-100 using different hyperparameters ($\beta$, $std_{hash}$, $N$). The teacher and the student share similar architectures.}
	\label{tab:CIFAR-100-b-same-arch}
	\centering
	\begin{tabular}{c|ccccccc}
		\toprule
		Teacher & WRN-40-2 & WRN-40-2 & resnet56 & resnet110 & resnet110 & resnet32x4 & vgg13 \\
		Student & WRN-16-2 & WRN-40-1 & resnet20 & resnet20 & resnet32 & resnet8x4 & vgg8 \\
		\midrule
		$std_{t}$		& 0.17 & 0.17 & 0.24 & 0.23 & 0.23 & 0.11 & 0.07 \\
		$std_{s}$		& 0.20 & 0.26 & 0.27 & 0.27 & 0.26 & 0.16 & 0.10 \\
		$D_t$ & 128 & 128 & 64 & 64 & 64 & 256 & 512 \\
		$D_s$ & 128 & 64 & 64 & 64 & 64 & 256 & 512 \\
		\midrule
		$(1, 1, 2048)$ & 75.33 & 73.50 & 71.25 & 71.17 & 73.37 & 73.64 & 73.06 \\
		$(3, 1, 2048)$ & 75.47 & 74.16 & 71.70 & 71.68 & 73.87 & 74.11 & 72.91 \\
		$(5, 1, 2048)$ & 75.99 & 74.43 & 71.41 & 71.66 & 73.32 & 73.66 & 73.77 \\
		$(6, 1, 2048)$ & 75.62 & 74.54 & 71.65 & 71.39 & 73.99 & 73.37 & 73.68 \\
		$(7, 1, 2048)$ & 76.34 & 74.36 & 71.18 & 71.78 & 73.96 & 73.70 & 73.89 \\
		\midrule
		$(6, std_{t}, 2048)$ & 76.11 & 74.42 & 70.96 & 71.75 & 74.00 & 73.91 & 73.57 \\
		$(6, std_{s}, 2048)$ & 75.53 & 74.25 & 71.43 & 71.60 & 74.19 & 73.82 & 73.61 \\
		\midrule
		$(6, std_{t}, 4\times D_t)$ & 76.43 & 74.15 & 71.27 & 71.13 & 73.55 & 74.13 & 73.57 \\
		$(6, std_{t}, 32\times D_t)$ & 75.84 & 74.51 & 70.96 & 71.75 & 74.00 & 73.77 & 73.25 \\
		\bottomrule
	\end{tabular}
\end{table*}

\begin{table*}[!htb] 
	\caption{Test accuracy (\%) of the student network on CIFAR-100 using different hyperparameters ($\beta$, $std_{hash}$, $N$). The teacher and the student use different architectures.}
	\label{tab:CIFAR-100-b-diff-arch}
	\centering
	\begin{tabular}{c|cccccc}
		\toprule
		Teacher & vgg13 & ResNet50 & ResNet50 & resnet32x4 & resnet32x4 & WRN-40-2 \\
		Student & MobileNetV2 & MobileNetV2 & vgg8 & ShuffleNetV1 & ShuffleNetV2 & ShuffleNetV1 \\
		\midrule
		$std_{t}$		& 0.07 & 0.04 & 0.04 & 0.11 & 0.11 & 0.17 \\
		$std_{s}$		& 0.06 & 0.06 & 0.10 & 0.06 & 0.06 & 0.06 \\
		$D_t$ & 512 & 2048 & 2048 & 256 & 256 & 128 \\
		$D_s$ & 640 & 640 & 512 & 960 & 1024 & 960 \\
		\midrule
		$(1, 1, 2048)$ & 66.82 & 65.79 & 72.12 & 75.23 & 75.42 & 74.98 \\
		$(3, 1, 2048)$ & 67.95 & 67.33 & 73.47 & 74.94 & 76.12 & 76.17 \\
		$(5, 1, 2048)$ & 68.01 & 67.60 & 74.64 & 75.38 & 75.56 & 76.06 \\
		$(6, 1, 2048)$ & 67.16 & 68.99 & 74.89 & 75.36 & 76.70 & 76.25 \\
		$(7, 1, 2048)$ & 67.88 & 69.20 & 74.43 & 75.25 & 76.70 & 76.35 \\
		\midrule
		$(6, std_t, 2048)$      & 68.12 & 67.57 & 72.89 & 75.22 & 76.52 & 75.63 \\
		$(6, std_s, 2048)$      & 67.77 & 67.47 & 73.68 & 74.93 & 76.27 & 75.70 \\
		\midrule
		$(6, std_t, 4\times D_t)$ & 68.12 & 67.95 & 72.80 & 75.02 & 76.46 & 76.36 \\
		$(6, std_t, 32\times D_t)$ & 67.78 & 67.33 & 72.76 & 75.36 & 76.25 & 75.83 \\
		\bottomrule
	\end{tabular}
\end{table*}

Table~\ref{tab:CIFAR-100-b-same-arch} and Table~\ref{tab:CIFAR-100-b-diff-arch} summarize the results. First, when $std_{hash}=1$ and $N=2048$, different teacher/student combinations achieve the best results with different $\beta$. So it is better to use a validation set to tune this hyperparameter. Limited by computation resources, we simply used $\beta=6$ for all experiments on CIFAR-100. Second, the value of $std_{hash}$ also affect the performance. But we find that it is less sensitive than $\beta$. Third, a larger $N$ may reduce the randomness in LSH. Experiments show that setting $N=2048$ is good enough. Overall, if applying our method to other problems, we suggest that $N=2048$ or $N=4D_t$,  $std_{hash}=1$ or $std_{hash}=std_t$, and finally using a validation set to tune $\beta$.

\subsubsection{Different model initialization}

\begin{table*}
	\caption{Test accuracy (\%) of the student network on CIFAR-100 with different initializations of bias in the LSH module. The teacher and the student share similar architectures. \textbf{Bold} denotes the best results.}
	\label{tab:CIFAR-100-same-arch-bias}
	\centering
	\begin{tabular}{c|ccccccc}
		\toprule
		Teacher & WRN-40-2 & WRN-40-2 & resnet56 & resnet110 & resnet110 & resnet32x4 & vgg13 \\
		Student & WRN-16-2 & WRN-40-1 & resnet20 & resnet20 & resnet32 & resnet8x4 & vgg8 \\
		\midrule
		KD              & 74.92 & 73.54 & 70.66 & 70.67 & 73.08 & 73.33 & 72.98 \\
		0 				  & \textbf{76.04} & 74.46 & 71.16 & \textbf{71.79} & \textbf{74.18} & \textbf{73.70} & 73.92 \\
		mean 		  & 75.39 & 74.11 & 71.52 & 70.95 & 73.85 & 73.64 & \textbf{73.98} \\
		median 		 & 75.62 & \textbf{74.54} & \textbf{71.65} & 71.39 & 73.99 & 73.37 & 73.68 \\
		\bottomrule
	\end{tabular}
\end{table*}

\begin{table*}[!htb] 
	\caption{Test accuracy (\%) of the student network on CIFAR-100 with different initializations of bias in the LSH module. The teacher and student use different architectures. \textbf{Bold} denotes the best results.}
	\label{tab:CIFAR-100-diff-arch-bias}
	\centering
	\begin{tabular}{c|cccccc}
		\toprule
		Teacher & vgg13 & ResNet50 & ResNet50 & resnet32x4 & resnet32x4 & WRN-40-2 \\
		Student & MobileNetV2 & MobileNetV2 & vgg8 & ShuffleNetV1 & ShuffleNetV2 & ShuffleNetV1 \\
		\midrule
		KD                   & 67.37 & 67.35 & 73.81 & 74.07 & 74.45 & 74.83 \\
		0 					  & 67.14 & 68.64 & 74.25 & \textbf{75.57} & \textbf{76.71} & 75.76 \\
		mean 			& \textbf{68.16} & 68.07 & 74.54 & 75.55 & 75.32 & 75.99 \\
		median 			& 67.16 & \textbf{68.99} & \textbf{74.89} & 75.36 & 76.70 & \textbf{76.25} \\
		\bottomrule
	\end{tabular}
\end{table*}

We study different initialization for bias in the LSH module. By default, the bias is initialized as the median of teacher hashing codes to balance the binary classification problem. We also tried to use the mean of teacher hashing codes or $\mathbf{0}$ to initialize the bias. Table~\ref{tab:CIFAR-100-same-arch-bias} and Table~\ref{tab:CIFAR-100-diff-arch-bias} present the results. We find that knowledge distillation is not sensitive to the initialization of bias. When apply our method on large-scale datasets (like ImageNet), we used $\mathbf{0}$ to initialize the bias because it is difficult to compute the median.

\subsection{Single-label Classification}
\label{sec:classification}

\begin{table*}
	\caption{Test accuracy (\%) of the student network on CIFAR-100. The teacher and the student share similar architectures. We denote by * methods where we re-run three times using author-provided code. And the results of our method were run by five times. \textbf{Bold} denotes the best results.}
	\label{tab:CIFAR-100-same-arch-sota}
	\centering
	\begin{tabular}{c|SSSSSSS}
		\toprule
		Teacher & {WRN-40-2} & {WRN-40-2} & {resnet56} & {resnet110} & {resnet110} & {resnet32x4} & {vgg13} \\
		Student & {WRN-16-2} & {WRN-40-1} & {resnet20} & {resnet20} & {resnet32} & {resnet8x4} & {vgg8} \\
		\midrule
		Teacher & 75.61 & 75.61 & 72.34 & 74.31 & 74.31 & 79.42 & 74.64 \\
		Student & 73.26 & 71.98 & 69.06 & 69.06 & 71.14 & 72.50 & 70.36 \\
		\midrule
		KD~\cite{KD}   			& 74.92 & 73.54 & 70.66 & 70.67 & 73.08 & 73.33 & 72.98 \\
		FitNet~\cite{fitnet}      & 73.58 & 72.24 & 69.21 & 68.99 & 71.06 & 73.50 & 71.02 \\
		AT~\cite{AT}           & 74.08 & 72.77 & 70.55 & 70.22 & 72.31 & 73.44 & 71.43 \\
		SP~\cite{SP}           & 73.83 & 72.43 & 69.67 & 70.04 & 72.69 & 72.94 & 72.68 \\
		AB~\cite{AB}           & 72.50 & 72.38 & 69.47 & 69.53 & 70.98 & 73.17 & 70.94 \\
		FT~\cite{FT}           & 73.25 & 71.59 & 69.84 & 70.22 & 72.37 & 72.86 & 70.58 \\
		FSP~\cite{FSP}         & 72.91 & {n/a} & 69.65 & 70.11 & 71.89 & 72.62 & 70.23 \\
		CRD~\cite{CRD}         & 75.48 & 74.14 & 71.16 & 71.46 & 73.48 & 75.51 & 73.94  \\
		CRD+KD~\cite{CRD} & 75.64 & 74.38 & \textbf{71.63} & 71.56 & 73.75 & 75.46 & 74.29 \\ 
		\rowcolor{Shade}
		Ours (1FC)  			& 75.99 & {-} & 71.39 & \textbf{71.64} & \textbf{73.90} & 73.40 & 73.78 \\ 
		\rowcolor{Shade}
		Ours  & \textbf{76.41} & 74.64 & 71.44 & 71.48 & 73.59 & 76.75 & 74.63 \\ 
		\midrule
		SSKD*~\cite{SSKD} & 75.55 & 75.50 & 71.00 & 71.27 & 73.60 & 76.13 & 74.90 \\
		\rowcolor{Shade}
		Ours + SSKD 		& 75.89 & \textbf{75.72} & 71.29 & 71.34 & 73.68 & \textbf{76.95} & \textbf{75.19} \\ 
		\bottomrule
	\end{tabular}
\end{table*}

\begin{table*}
	\caption{Test accuracy (\%) of the student network on CIFAR-100. The architectures of teacher and student are different. We denote by * methods where we re-run three times using author-provided code. And the results of our method were run by five times. \textbf{Bold} denotes the best results.}
	\label{tab:CIFAR-100-diff-arch-soa}
	
	\centering
	\begin{tabular}{c|SSSSSS}
		\toprule
		Teacher & {vgg13} & {ResNet50} & {ResNet50} & {resnet32x4} & {resnet32x4} & {WRN-40-2}  \\
		Student & {MobileNetV2} & {MobileNetV2} & {vgg8} & {ShuffleNetV1} & {ShuffleNetV2} & {ShuffleNetV1}  \\
		\midrule
		Teacher & 74.64 & 79.34 & 79.34 & 79.42 & 79.42 & 75.61 \\
		Student & 64.60 & 64.60 & 70.36 & 70.50 & 71.82 & 70.50 \\
		\midrule
		KD~\cite{KD}           		& 67.37 & 67.35 & 73.81 & 74.07 & 74.45 & 74.83 \\
		FitNet~\cite{fitnet}      & 64.14 & 63.16 & 70.69 & 73.59 & 73.54 & 73.73 \\
		AT~\cite{AT}           & 59.40 & 58.58 & 71.84 & 71.73 & 72.73 & 73.32 \\
		SP~\cite{SP}           & 66.30 & 68.08 & 73.34 & 73.48 & 74.56 & 74.52 \\
		AB~\cite{AB}           & 66.06 & 67.20 & 70.65 & 73.55 & 74.31 & 73.34 \\
		FT~\cite{FT}           & 61.78 & 60.99 & 70.29 & 71.75 & 72.50 & 72.03 \\
		CRD~\cite{CRD}         & 69.73 & 69.11 & 74.30 & 75.11 & 75.65 & 76.05  \\
		CRD+KD~\cite{CRD} & 69.94 & 69.54 & 74.58 & 75.12 & 76.05 & 76.27 \\ 
		\rowcolor{Shade} 
		Ours 		& 69.42 & 69.64 & 74.74 & 77.06 & 77.08 & \textbf{77.57} \\ 
		\midrule
		SSKD*~\cite{SSKD} & 71.24 & 71.81 & 75.71 & 78.18 & 78.75 & 77.30 \\
		\rowcolor{Shade} Ours+SSKD &  \textbf{71.77} & \textbf{72.38}& \textbf{76.13} & \textbf{78.32} & \textbf{79.01} & 77.46 \\
		\bottomrule
	\end{tabular}
\end{table*}

\begin{table*}[!htb] 
	\caption{Top-1 and Top-5 error rates (\%) on the ImageNet validation set. The teacher and student are ResNet-34 and ResNet-18, respectively. \textbf{Bold} denotes the best results.}
	\label{tab:imagenet}
	\centering
	\setlength{\tabcolsep}{3pt}
	\begin{tabular}{c|cc|cccccccc|cc}
		\toprule
		& Teacher & Student & CC~\cite{CC} & SP~\cite{SP} & Online-KD~\cite{OnlineKD} & KD~\cite{KD} & AT~\cite{AT} & CRD~\cite{CRD} & CRD+KD & SSKD~\cite{SSKD} & Ours ($\ell_2$) & Ours ($\ell_2$ + LSH) \\
		\midrule
		Top-1 & 26.70 & 30.25 & 30.04 & 29.38 & 29.45 & 29.34 & 29.30 & 28.83 & 28.62 & 28.38 & 28.61 & \textbf{28.28} \\
		Top-5 & 8.58  & 10.93 & 10.83 & 10.20 & 10.41 & 10.12 & 10.00 & 9.87 & 9.51 & \textbf{9.33} & 9.61 & 9.59 \\
		\bottomrule
	\end{tabular}
\end{table*}

Table~\ref{tab:CIFAR-100-same-arch-sota} and Table~\ref{tab:CIFAR-100-diff-arch-soa} compare our method with other knowledge distillation approaches on the CIFAR-100 benchmark. We simply set $\beta=6$, $std_{hash}=1$ and $N=2048$ for all experiments. And for a fair comparison, we used the same teacher networks as CRD~\cite{CRD}. Different from SSKD~\cite{SSKD}, we only used self-supervised learning~\cite{simclr} to train student networks and got the backbone weights to initialize our framework. 

Table~\ref{tab:CIFAR-100-same-arch-sota} presents the results when the teacher and student share similar architecture. Note that ``Ours (1FC)'' removed the linear embedding layer, which is possible because teacher and student features have the same dimensionality. Our method surpasses CRD+KD~\cite{CRD} on most teacher/student combinations. Note that our method did not use the original KD~\cite{KD} loss, and is thus more flexible. Compared with SSKD~\cite{SSKD}, our method outperform on five teacher/student combinations. And our method can be combined with SSKD (``Ours + SSKD''), which consistently outperforms SSKD. We simply set $\beta=0.01, std_{hash}=1, N=2048$ and added our loss terms into the SSKD framework.

Table~\ref{tab:CIFAR-100-diff-arch-soa} summarizes the results when the architectures of teacher and student are different. Our method outperformed CRD+KD~\cite{CRD} on the majority of teacher/student combinations, but slightly worse than SSKD~\cite{SSKD}. These results suggest that with different teacher/student architectures, self-supervised learning is critical for KD (because SSKD outperformed other methods). However, note that our method can be combined with SSKD, which consistently outperforms SSKD. Same as that on similar architecture, we simply set $\beta=0.01, std_{hash}=1, N=2048$ and added our loss terms into the SSKD framework. 

Table~\ref{tab:imagenet} summarize the results on ImageNet. The hyperparameters in our method are $\beta=5$, $std_{hash}=std_t$ and $N=2048$. Note that different from $CRD+KD$ and $SSKD$, we did not use the standard KD loss~\cite{KD} to boost the performance. Only using the $\ell_2$ loss to force the student features to mimic the teacher features outperforms CRD, which once again supports the validity of our proposed feature mimicking. Combining the $\ell_2$ and LSH losses further boosts the performance by a significant margin and achieves the state-of-the-art performance, which further supports the proposed LSH loss.

\subsection{Multi-label classification}
\label{sec:multi-label classification}

We consider two typical multi-label classification tasks, i.e., VOC2007~\cite{VOC} and MS-COCO~\cite{COCO}. VOC007 contains a train-val set of 5011 images and a test set of 4952 images. And MS-COCO contains 82081 images in the training set and 40137 images for validation. We resize all images into a fixed size ($448\times 448$) to train the networks. And the data augmentation consist of random horizontal flips and color jittering. The backbone networks contain MobileNetV2, ResNet18, ResNet34, ResNet50 and ResNet101. The networks are all pre-trained on ImageNet and finetuned on the multi-label classification dataset with stochastic gradient descent (SGD) for 60 epochs in total. The binary cross entropy (BCE) loss is used to finetune the network. We employ the mean average precision (mAP) to evaluate all the methods. Note that multi-label recognition is \emph{not a typical application of KD because existing KD methods rely on the soft logits, which do not exist in multi-label scenarios}. The proposed feature mimicking method, however, is \emph{flexible and handles multi-label distillation well}.

\begin{table}
	\caption{Test mAP (\%) on Pascal VOC2007.}
	\label{tab:VOC-ablation-study-1}
	\centering
	\begin{tabular}{l|cc}
		\toprule
		Teacher & ResNet34 & ResNet34 \\
		Student & ResNet18 & ResNet18 (pretrained by KD) \\
		\midrule
		Teacher    & 91.69 & 91.69 \\
		Student 	& 89.15 & 89.88 \\
		\midrule
		KD			& $89.26\,(4\%\uparrow)$ & $89.85\,(2\%\downarrow)$ \\
		$\ell_2$ (1FC)    & $88.75\,(20\%\downarrow)$ & $90.89\,(56\%\uparrow)$ \\
		$\ell_2$ (2FC)    & $89.98\,(33\%\uparrow)$ & $90.77\,(49\%\uparrow)$ \\
		\bottomrule
	\end{tabular}
\end{table}

First, we conduct experiments on VOC2007 using ResNet34 as teacher and ResNet18 as student to demonstrate that feature space alignment is necessary and important. The teacher ResNet34 is first trained on ImageNet and then finetuned on VOC2007. It achieves $91.69\%$ mAP as in Table~\ref{tab:VOC-ablation-study-1}. The student ResNet18 achieves $89.15\%$ mAP. And in Section~\ref{sec:classification}, we have trained ResNet18 supervised by ResNet34 on ImageNet. This model is denoted as ``ResNet18 (pretrained by KD)'' and achieves $89.88\%$ mAP. When finetuned on VOC2007 supervised by the teacher with the $\ell_2$ loss, ResNet18 achieves a worse performance ($88.75\%$) than baseline, which we believe is because the feature spaces of the teacher and student do not align well. If we use ResNet18 pretrained by KD whose feature space aligns to the teacher's, the student can be improved to $90.89\%$. With the 2FC structure, the first linear layer can transform the student feature space to align to the teacher's. It alleviates the feature space misalignment issue and achieve a better performance ($89.98\%$) than baseline. ResNet18 pretrained by KD with 2FC achieves a worse performance ($90.77\%$) than that with 1FC. That demonstrates it does not need the first linear layer to transform the feature space.

\begin{table}
	\caption{Test mAP (\%) of the student network on Pascal VOC07. \textbf{Bold} denotes the best results.}
	\label{tab:VOC-ablation-study-2}
	\centering
	\setlength{\tabcolsep}{3pt}
	\begin{tabular}{c|ccc}
		\toprule
		\diagbox{2nd stage}{1st stage} 	& L2 & LSH & LSHL2 \\
		\midrule
		L2          & $90.40\,(49\%\uparrow)$ & $90.21\,(42\%\uparrow)$ & $\textbf{90.59}\,(57\%\uparrow)$ \\
		LSH 		& $90.29\,(45\%\uparrow)$ & $90.11\,(38\%\uparrow)$ & $90.30\,(45\%\uparrow)$ \\
		LSHL2    	& $90.57\,(56\%\uparrow)$ & $90.37\,(48\%\uparrow)$ & $\textbf{90.59}\,(57\%\uparrow)$ \\
		\bottomrule
	\end{tabular}
\end{table}

Although the backbone pretrained by KD on a large scale dataset will transfer better and easily mimic the teacher's features during finetuning, it is expensive to pretrain the student on a large scale dataset in many cases. Hence, we propose a simple but effective approach to alleviate the feature space misalignment problem. We finetune the student by two stages. In the first stage, we fix the weights in the student backbone and only optimize the linear embedding layer with the feature mimicking loss functions. This stage aims at transform the student feature space to align to the teacher's. In the second stage, we add the classifier on top of the linear embedding layer and optimize all parameters in the student with the supervision of both the groundtruth labels and the teacher. Table~\ref{tab:VOC-ablation-study-2} summarize the results. With this two-stage training, the student can be improved by a large margin, compared with $89.98\%$ mAP when training the student by one stage. We find that the feature mimicking loss chosen in the first stage is important, and the LSHL2 ($\Loss_{mse} + \Loss_{lsh}$) loss is consistently better than the $\ell_2$ loss.

\begin{table}
	\caption{Test mAP (\%) of the student network on Pascal VOC2007. \textbf{Bold} denotes the best results.}
	\label{tab:VOC-class}
	\centering
	\begin{tabular}{l|cc}
		\toprule
		Teacher & ResNet101 & ResNet101\\
		Student & ResNet50  & MobileNetV2\\
		\midrule
		Teacher             & 93.27 & 93.27 \\
		Student 		    & 92.76 & 89.53 \\
		\midrule
		LSHL2 $\rightarrow$ KD     & $92.69\,(14\%\downarrow)$ & $89.64\,(3\%\uparrow)$ \\
		LSHL2 $\rightarrow$ L2    & $\textbf{93.17}\,(80\%\uparrow)$ & $\textbf{90.14}\,(16\%\uparrow)$ \\
		LSHL2 $\rightarrow$ LSH   & $92.40\,(71\%\downarrow)$ & $89.91\,(10\%\uparrow)$\\
		LSHL2 $\rightarrow$ LSHL2 & $92.85\,(18\%\uparrow)$ & $89.90\,(10\%\uparrow)$\\
		\bottomrule
	\end{tabular}
\end{table}

\begin{table}
	\caption{Test mAP (\%) of the student network on MS-COCO. \textbf{Bold} denotes the best results.}
	\label{tab:COCO-class}
	\centering
	\begin{tabular}{l|cc}
		\toprule
		Teacher  & ResNet101 & ResNet101\\
		Student  & ResNet50  & MobileNetV2\\
		\midrule
		Teacher             & 77.67 & 77.67 \\
		Student 		    & 75.54 & 71.06 \\
		\midrule
		LSHL2 $\rightarrow$ KD     & $75.14\,(19\%\downarrow)$ & $71.47\,(6\%\uparrow)$ \\
		LSHL2 $\rightarrow$ L2    	& $77.04\,(70\%\uparrow)$ & $73.28\,(34\%\uparrow)$ \\
		LSHL2 $\rightarrow$ LSH     & $76.59\,(49\%\uparrow)$ & $73.73\,(40\%\uparrow)$\\
		LSHL2 $\rightarrow$ LSHL2   & $\textbf{77.16}\,(76\%\uparrow)$ & $\textbf{73.73}\,(40\%\uparrow)$ \\
		\bottomrule
	\end{tabular}
\end{table}

We conduct experiments on VOC2007 and MS-COCO and adopt two settings, i.e., using ResNet101 to teach ResNet50 and MobileNetV2, respectively. The student is finetuned with the two-stage strategy, and the LSHL2 loss is used in the first stage based on the above findings. Table~\ref{tab:VOC-class} presents the results on VOC2007. LSHL2 $\rightarrow$ L2 denotes using the LSHL2 loss in the first stage and the $\ell_2$ loss in the second stage. The hyperparameters are set as $\beta=0.5$, $std_{hash}=std_t$, and $N=4D_t$ in all experiments. LSHL2 $\rightarrow$ L2 achieves the best performance. ResNet50 is improved by $0.41\%$ and MobileNetV2 is improved by $0.61\%$. Experimental results of MS-COCO are showed in Table~\ref{tab:COCO-class}. And we use $\beta=3$, $std_{hash}=std_t$, and $N=4D_t$ in all experiments. LSHL2 $\rightarrow$ LSHL2 achieves the best performance.

\begin{table}
	\caption{Test mAP (\%) on MS-COCO. The backbone networks use the global maximum pooling (GMP) to aggregate features. \textbf{Bold} denotes the best results.}
	\label{tab:COCO-class-soa}
	\centering
	\begin{tabular}{l|r@{.}lr@{.}lr@{.}l}
		\toprule
		Model				  & \multicolumn{2}{c}{MobileNetV2} & \multicolumn{2}{c}{ResNet50} & \multicolumn{2}{c}{ResNet101} \\
		\midrule
		Baseline             			& 73&90 & 77&20 & 79&57 \\
		\midrule
		MCAR~\cite{MCAR}    & 75&0 & \textbf{82}&\textbf{1} & \textbf{83}&\textbf{8} \\
		Ours       		& \textbf{76}&\textbf{03} & 79&55 & 81&24 \\
		\bottomrule
	\end{tabular}
\end{table}

A common trick in the multi-label classification task is replacing the global average pooling (GAP) with the global maximum pooling (GMP). So we evaluate the backbone network with GMP on the MS-COCO. Table~\ref{tab:COCO-class-soa} presents the results. As previously mentioned, we use ResNet101 (GMP) to teach MobileNetV2 (GMP) and ResNet50 (GMP). In addition, we also evaluate the performance of self-distillation, i.e., using ResNet101 (GMP) to teach ResNet101 (GMP). Our method achieves better performances than baselines. We compared our method with MCAR~\cite{MCAR}, which employs a complex training pipeline designed for multi-label classification and is the state-of-the-art method on multi-label classification. Our MobileNetV2 surprisingly surpassed that in MCAR, which demonstrates the advantage of our method.

\subsection{Detection}

\begin{figure*}
	\centering 
	\subfloat[]{\label{fig:fasterrcnn}\includegraphics[width=0.48\linewidth]{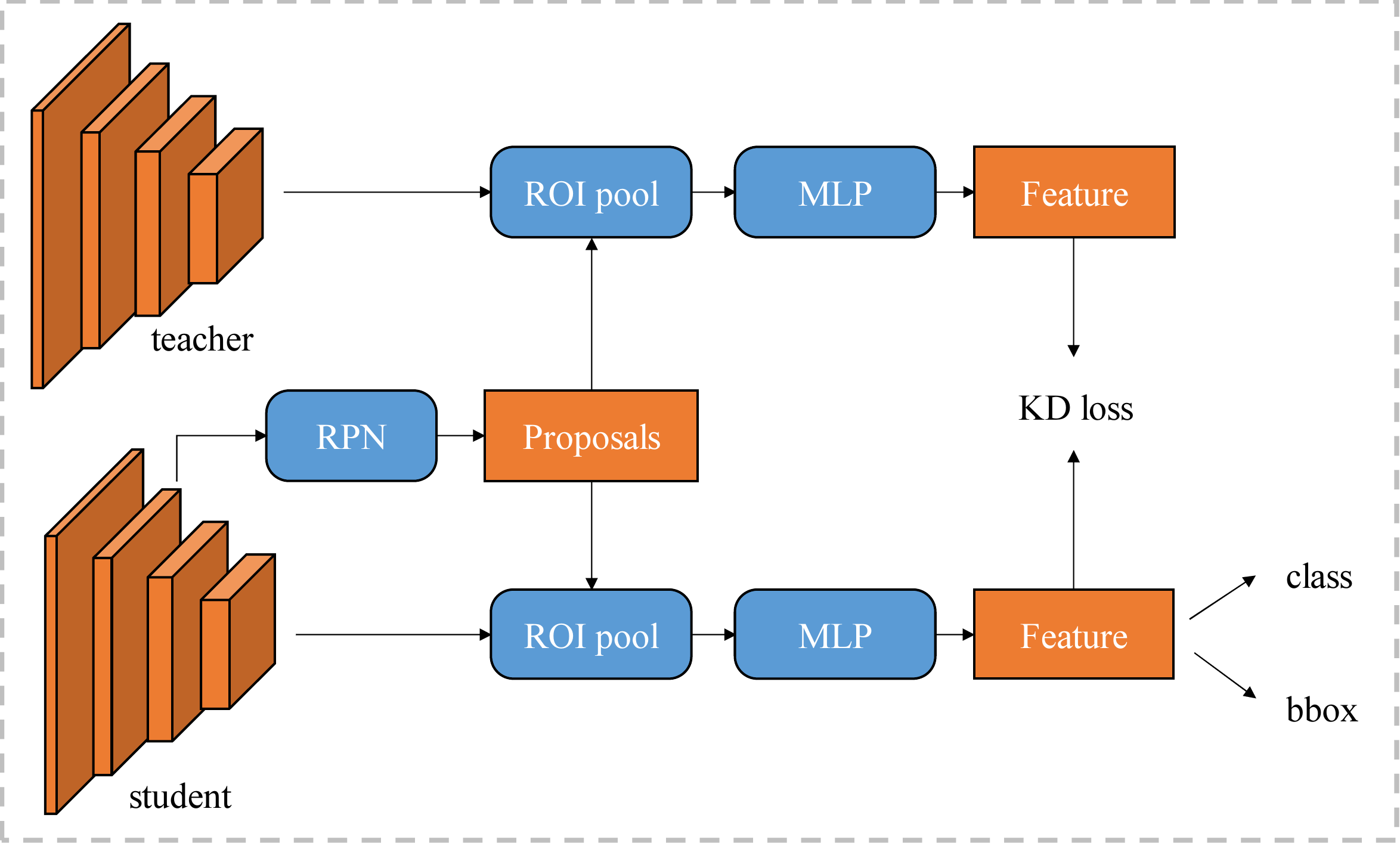}}\quad
	\subfloat[]{\label{fig:retinanet}\includegraphics[width=0.48\linewidth]{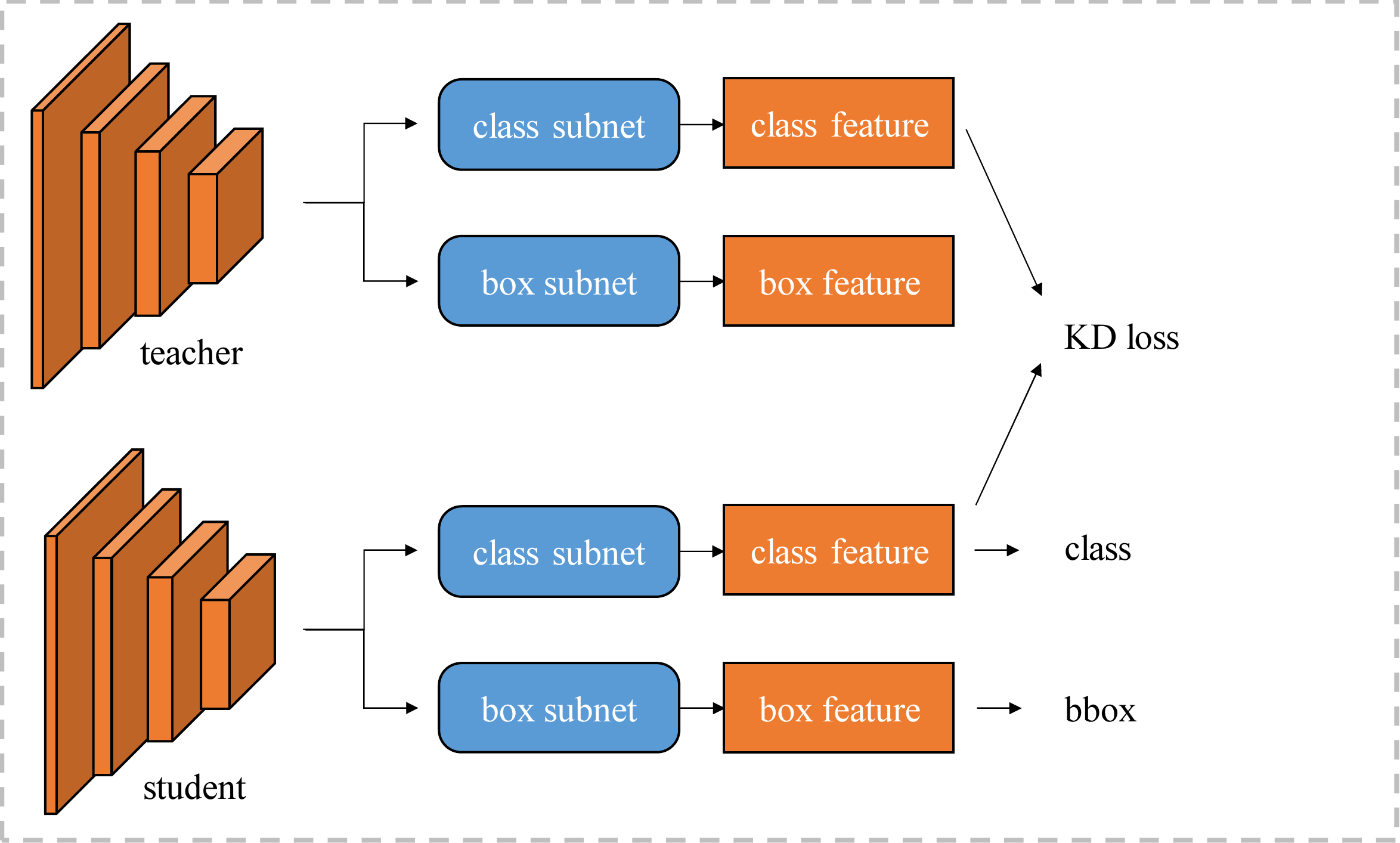}}\\
	\caption{The pipeline of our method on the object detection task. \protect\subref{fig:fasterrcnn} and \protect\subref{fig:retinanet} show our feature mimicking framework with Faster-RCNN and RetinaNet, respectively. This figure is best viewed in color and zoomed in.}
	\label{fig:detection} 
\end{figure*}

We evaluate our method on the object detection task. Following previous work~\cite{PAD}, we conduct experiments on the Pascal VOC dataset~\cite{VOC}. The training set consists of the VOC2007 trainval set and the VOC2012 trainval set, and in total 21K images. The testing set is the VOC2007 test set of 5K images. We use mAP@0.5 as the metric to compare the performance of different methods. The detection frameworks we adopted are both two-stage (Faster-RCNN~\cite{Faster-rcnn}) and one-stage (RetinaNet~\cite{RetinaNet}). And we use four networks (ResNet50, ResNet101, VGG11, VGG16) pretrained on ImageNet as the backbone. FPN~\cite{FPN} layers are adopted in all experiments. All models are finetuned on VOC with 24 epochs. The hyperparameters in feature mimicking loss are set as $\beta=7$, $std_{hash}=std_t$, $N=4D_t$ and $bias=0$ in all experiments. We have released our code.\footnote{\url{https://git.nju.edu.cn/wanggh/detection.vision}}

\begin{table}
	\caption{Test mAP@0.5 (\%) of the student network on Pascal VOC0712. The detector is Faster R-CNN with different backbones. \textbf{Bold} denotes the best results.}
	\label{tab:VOC-fasterrcnn}
	\centering
	\begin{tabular}{c|cc}
		\toprule
		Teacher & ResNet101 & VGG16 \\
		Student & ResNet50  & VGG11 \\
		\midrule
		Teacher              	& 83.6 & 79.0 \\
		Student 		    	& 82.0 & 75.1  \\
		\midrule
		ROI-mimic~\cite{ROI_mimic} 		 & $82.3\,(19\%\uparrow)$ & $75.0\,(3\%\downarrow)$ \\
		PAD-ROI-mimic~\cite{PAD} 		 		& $82.5\,(31\%\uparrow)$ & $75.8\,(18\%\uparrow)$ \\
		Fine-grained~\cite{DistillOD_FG}        & $82.0\,(0\%\uparrow)$ & $74.6\,(13\%\downarrow)$ \\
		PAD-Fine-grained~\cite{PAD}   			 & $82.3\,(19\%\uparrow)$ & $75.2\,(3\%\uparrow)$ \\
		\rowcolor{Shade}
		Ours (L2)                  & $83.0\,(63\%\uparrow)$ & $76.9\,(46\%\uparrow)$ \\
		\rowcolor{Shade}
		Ours (LSHL2)               	& $\textbf{83.1}\,(69\%\uparrow)$ & $\textbf{77.2}\,(54\%\uparrow)$\\
		\bottomrule
	\end{tabular}
\end{table}

Figure~\ref{fig:fasterrcnn} shows our feature mimicking framework with Faster-RCNN. As in classification, we want the student to mimic features in the penultimate layer. In the object detection framework, two linear layers are applied on the penultimate layer to generate the classification and bounding box predictions, respectively. Given one image, the backbone and FPN produce the feature pyramid, and the region proposal network (RPN) generates proposals to indicate the localities that objects may appear. Hence, many features are extracted according to the proposals. To make sure the student will mimic teacher's features in the same locations, the teacher uses the proposals produced by the student. When training this framework, we only add the proposed loss to the original loss and apply the traditional training strategy. Our proposed loss is applied on the entire detection network, and it affects the optimization of the backbone network, FPN, RPN and MLP.

The experimental results are presented in Table~\ref{tab:VOC-fasterrcnn}. First, we use ResNet101 to teach ResNet50. The performances of these two baseline networks are $83.6\%$ and $82.0\%$, respectively. The teacher is higher than student by $1.6\%$. All experimental results of ROI-mimic, PAD-ROI-mimic, Fine-grained and PAD-Fine-grained are cited from PAD~\cite{PAD}. They improve the student by at most $0.5\%$. With our feature mimicking framework, i.e., mimicking the features in the penultimate layer, simply using the $\ell_2$ loss as the feature mimicking loss can improve the student by $1\%$. That shows the benefit of feature mimicking on object detection. Combining the LSH loss and the $\ell_2$ loss, the student is improved by $1.1\%$. When using VGG16 to teach VGG11, the $\ell_2$ loss can improve the student by $1.8\%$. With the LSH loss, the student is improved by $2.1\%$.

\begin{table}
	\caption{Test mAP@0.5 (\%) of the student network on Pascal VOC0712. The detector is RetinaNet with different backbones. \textbf{Bold} denotes the best results.}
	\label{tab:VOC-retinanet}
	\centering
	\begin{tabular}{c|cc}
		\toprule
		Teacher & ResNet101 & VGG16 \\
		Student & ResNet50  & VGG11 \\
		\midrule
		Teacher   & 83.0 & 76.6 \\
		Student 	& 82.5 & 73.2  \\
		\midrule
		Fine-grained~\cite{DistillOD_FG} & $81.5\,(200\%\downarrow)$ & $72.0\,(35\%\downarrow)$ \\
		PAD-Fine-grained~\cite{PAD}	    & $81.9\,(120\%\downarrow)$ & $73.2\,(0\%\downarrow)$ \\
		\rowcolor{Shade}
		Ours (L2)                  & $82.6\,(20\%\uparrow)$ & $74.8\,(47\%\uparrow)$ \\
		\rowcolor{Shade}
		Ours (LSHL2)           & $\textbf{83.0}\,(\textbf{100}\%\uparrow)$ & $\textbf{75.2}\,(59\%\uparrow)$\\
		\bottomrule
	\end{tabular}
\end{table}

Figure~\ref{fig:retinanet} shows our feature mimicking framework with RetinaNet. Different from Faster-RCNN, RetinaNet produces features on all positions of the feature pyramid, and each position will consider several anchors. With the groundtruth bounding boxes, only a few of positions are considered as positive and sent to the classification loss. We force the student to mimic the features on these positive positions and ignore the features on negative positions. RetinaNet uses class subnet and box subnet to generate class feature and box feature, respectively. We find that it is better to only mimic the class feature and ignore the box feature. So our proposed feature mimicking loss affects the optimization of the backbone network, FPN and the class subnet. Table~\ref{tab:VOC-retinanet} shows the experimental results. Similar to Faster-RCNN, our feature mimicking framework can improve the student with a large margin. ResNet50 is improved by $0.5\%$ whose performance is comparable to the teacher performance. And VGG11 is also improved by $2\%$.

Overall, these object detection experimental results demonstrate the advantages of our method. The LSHL2 loss is consistently better than the $\ell_2$ loss in all experiments. Note that the difference between the teacher and the student is smaller when compared to the differences in recognition tasks. However, the high relative improvement numbers and the consistent improvements across different experiments both verifies our proposed method is effective. In this paper, we only focus on mimicking the final features and leave mimicking proposals as the future work. However, only using feature mimicking has already improved the student by a large margin, and the RetinaNet with ResNet50 backbone is even comparable to the teacher performance.

Compared with multi-label classification, we find it does not need the two stage training strategy on object detection. We guess it may be due to the MLP layer and the subnet in Faster-RCNN and RetinaNet, respectively. These layers are randomly initialized before finetuning on the detection dataset. The feature space alignment will be learned implicitly in these layers.

\section{Conclusion}
\label{sec:conclusion}

In this paper, we proposed a flexible and effective knowledge distillation method. We argued that mimicking feature in the penultimate layer is more advantageous than distilling the teacher's soft logits~\cite{KD}. And to make the student learn the more effective information from the teacher, it needs to give the student more freedom to its feature magnitude, but let it focus on mimicking the feature direction. We proposed a loss term based on Locality-Sensitive Hashing (LSH)~\cite{LSH} to fulfill this objective. Our algorithm was evaluated on single-label classification, multi-label classification and object detection. Experiments showed the effectiveness of the proposed method. 

Future work could explore how to improve our method, such as reducing the randomness in the LSH module, and
aligning feature spaces efficiently and even if without training data. Applying our method to other problems is also interesting. It is promising to combine our method with self-supervised learning. And we will also consider how to deploy our method to knowledge distillation under a data free setting. 

\bibliographystyle{IEEEtran}
\bibliography{egbib}

\begin{IEEEbiography}[{\includegraphics[width=1in,height=1.25in,clip,keepaspectratio]{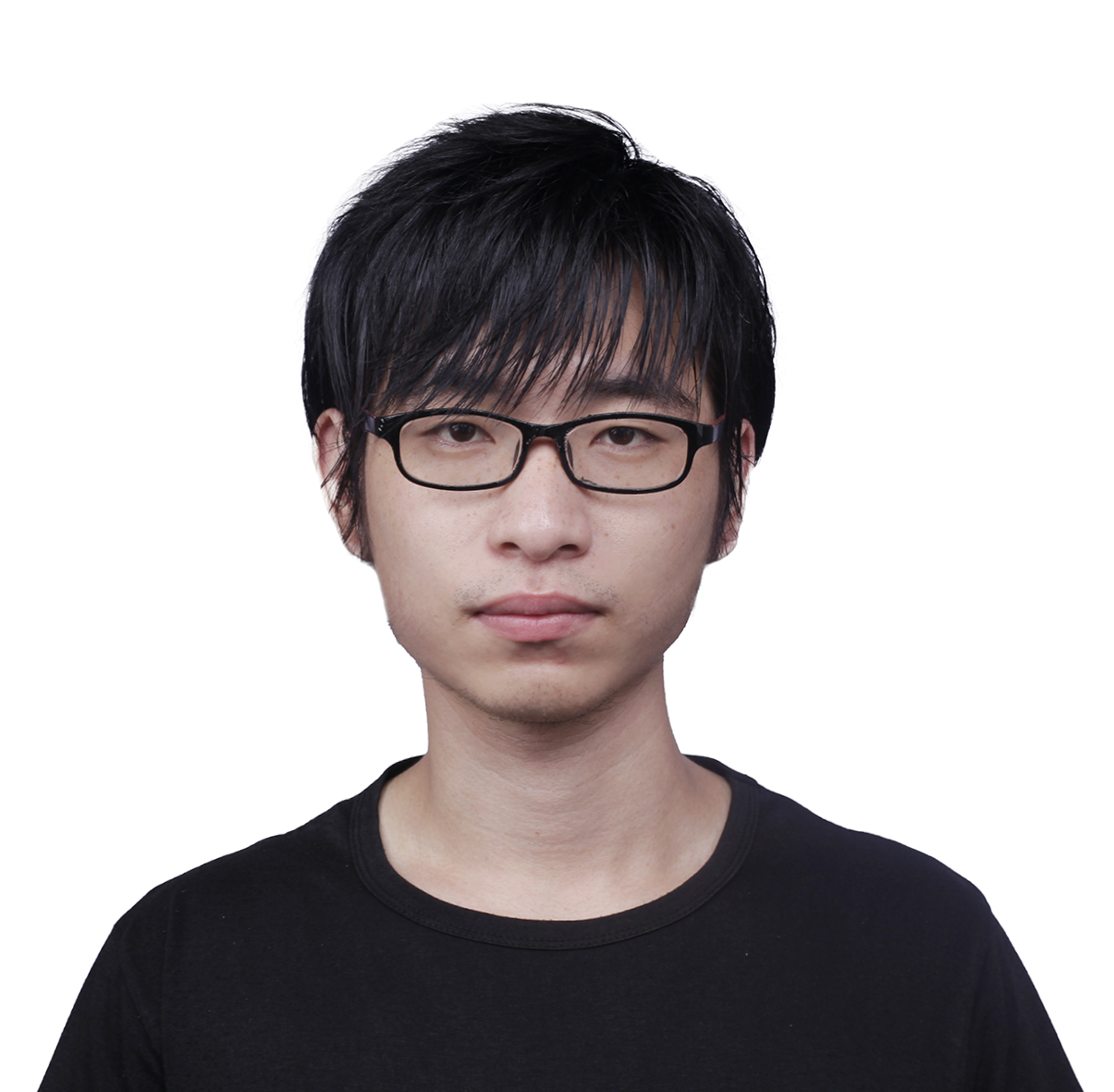}}]{Guo-Hua Wang}
	received  his  BS  degree  in the School of Management and Engineering from Nanjing University. He is currently a Ph.D. student in the Department of Computer Science and Technology in Nanjing University, China. His research interests are computer vision and machine learning.
\end{IEEEbiography}

\begin{IEEEbiography}[{\includegraphics[width=1in,height=1.25in,clip,keepaspectratio]{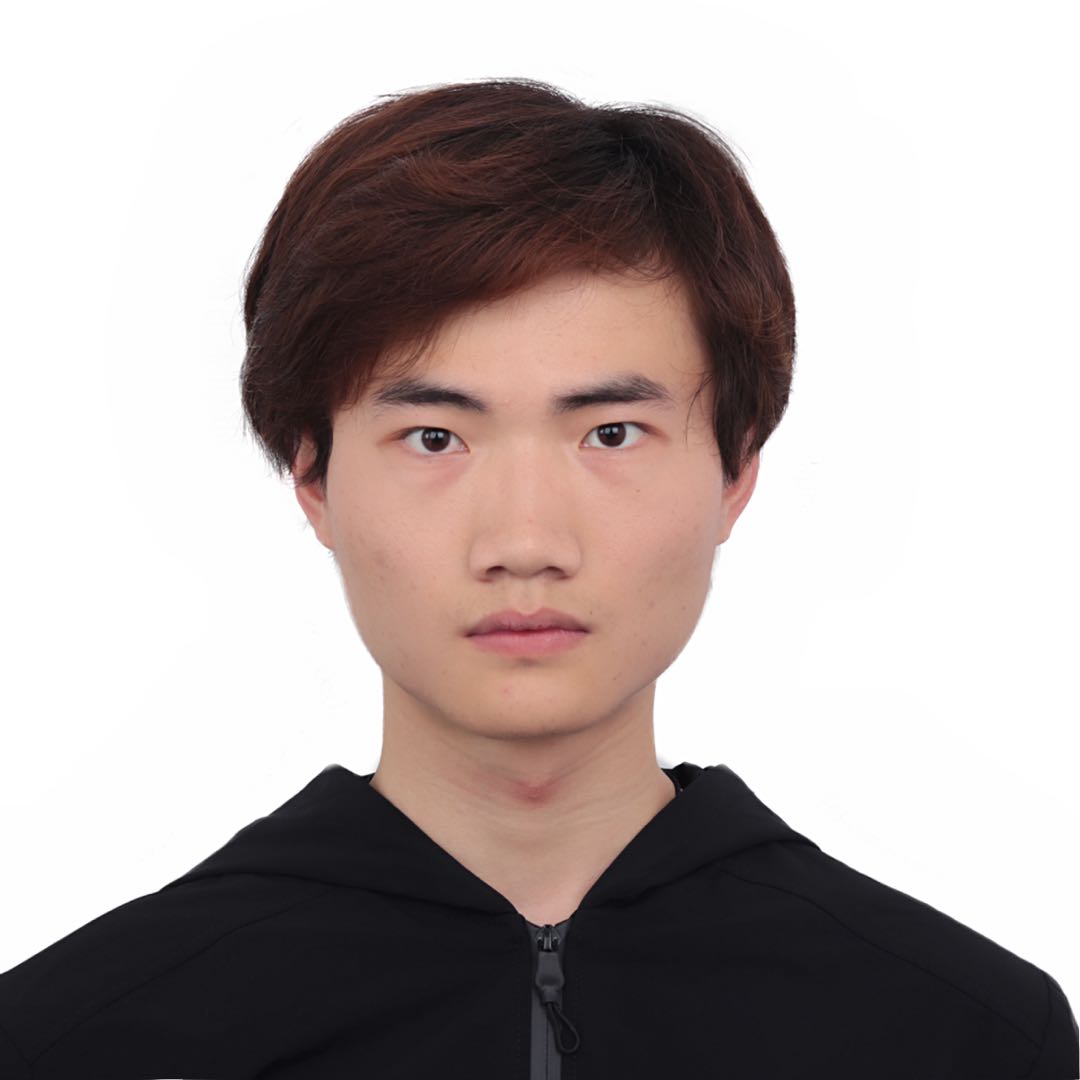}}]{Yifan Ge}
	received  his  BS  degree  in the Kuang Yaming Honors School from Nanjing University in 2019. He is currently a graduate student in the School of Artificial Intelligence at Nanjing University, China. His research interests include computer vision and machine learning.
\end{IEEEbiography}

\begin{IEEEbiography}[{\includegraphics[width=1in,height=1.25in,clip,keepaspectratio]{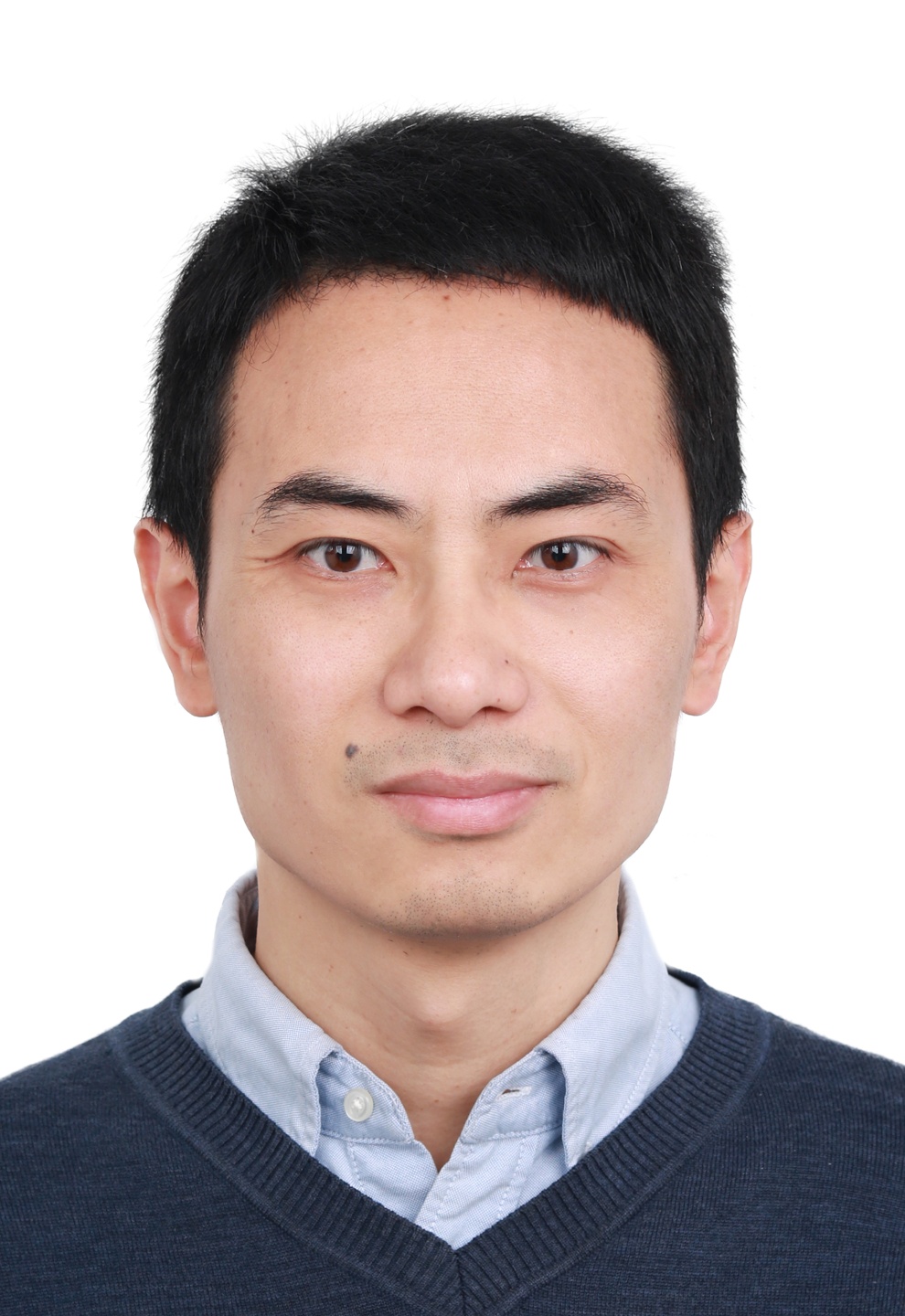}}]{Jianxin Wu}
	received his BS and MS degrees from Nanjing University, and his PhD degree from the Georgia Institute of Technology, all in computer science. He is currently a professor in the Department of Computer Science and Technology and the School of Artificial Intelligence at Nanjing University, China, and is associated with the State Key Laboratory for Novel Software Technology, China. He has served as an (senior) area chair for CVPR, ICCV, ECCV, AAAI and IJCAI, and as an associate editor for the IEEE Transactions on Pattern Analysis and Machine Intelligence. His research interests are computer vision and machine learning.
\end{IEEEbiography}

\clearpage

\appendices

\section{Proof of Claim~\ref{claim:1}}

\begin{proof}
	We will prove that $h_{\mW, \mathbf{b'}}(s\vf_t) =h_{\mW, \vb}(\vf_t)$, where $\mW=[\vw_1,\vw_2,\cdots,\vw_N]\tran$, $\mathbf{b'}=[b'_1,b'_2,\cdots,b'_N]\tran$ and $\vb=[b_1,b_2,\cdots,b_N]\tran$. Assume there are $m$ teacher features, that is $\vf_1,\vf_2,\cdots,\vf_m$.
	
	For $0\le j \le N$, $b_j=median(\vf_1,\vf_2,\cdots,\vf_m)$. And it is easy to see $b'_j=median(s\vf_1,s\vf_2,\cdots,s\vf_m)=sb_j$ when $s>0$. Hence, $\sign(\vw_j\tran\vf_i+b_j)=\sign(\vw_j\tran s\vf_i+sb_j)$ for $0\le i \le m$ when $s>0$. That implies $h_{\mW, \mathbf{b'}}(s\vf_t) =h_{\mW, \vb}(\vf_t)$.
\end{proof}

\section{Proof of Claim~\ref{claim:2}}

\begin{proof}
	Note that 
	\begin{equation}
		\Loss_{lsh}(\vf_t, \vf_s) = -\frac{1}{N}\sum_{j=1}^{N}\left[h_j\log p_j + (1-h_j)\log(1-p_j)\right]\,,
	\end{equation}
	where $h_j$ and $p_j$ is the $j$-th entry of $\sign(\mW\tran\vf_t)$ and $\sigma(\mW\tran\vf_s)$, respectively.

	First, we discuss the situation when $h_j=1$, that is, the angle between $\vw_j$ and $\vf_t$ is less than 90 degrees, and $\cos\langle\vw_j, \vf_t\rangle\geq 0$. Therefore, 
	\begin{align}
		\label{claim2:proof1}
		& -h_j\log\sigma(s\vw_j\tran\vf_s)-(1-h_j)\log(1-\sigma(s\vw_j\tran\vf_s)) \notag \\
		=&
		-\log \sigma(s\|\vw_j\|\|\vf_s\|\cos {\langle\vw_j, \vf_s\rangle}) \\
		\leq & -\log \sigma(\|\vw_j\|\|\vf_s\|\cos {\langle\vw_j, \vf_t\rangle}) \\
		=&-h_j\log p_j-(1-h_j)\log(1-p_j) \,.
	\end{align}
	
	Then, when $h_j=0$, similar to equation~\ref{claim2:proof1}, we can get
	\begin{align}
		& -h_j\log\sigma(s\vw_j\tran\vf_s)-(1-h_j)\log(1-\sigma(s\vw_j\tran\vf_s)) \notag \\
		=&
		-\log \left(1-\sigma(s\vw_j\tran\vf_s)\right) \\
		\leq & -\log \left(1-\sigma(\|\vw_j\|\|\vf_s\|\cos {\langle\vw_j, \vf_t\rangle})\right) \\
		= & -h_j\log p_j-(1-h_j)\log(1-p_j) \,.
	\end{align}
	
	To sum up, $\Loss_{lsh}(\vf_t, s\vf_s)\leq \Loss_{lsh}(\vf_t, \vf_s)$ always holds when $s>1$. 
\end{proof}

\section{Proof of Claim~\ref{claim:lsh_p} and Claim~\ref{claim:degree_p}}

We define the notations and terminologies first. We assume that $\vf_s$ and $\vf_t$ follow the standard normal distribution:
\begin{itemize}
	\item $\vf_t \in \R^D: {\vf_t} \sim \mathcal{N}(\mathbf{0}, \mathbf{I}_D)$
	\item $\vf_s \in \R^D: {\vf_t} \sim \mathcal{N}(\mathbf{0}, \mathbf{I}_D)$
\end{itemize}

In our LSH module, $\mW\in \R^{D \times N}$ can be alternatively written as $[\vw_1,\vw_2,\cdots,\vw_N]\tran$ and entries of $\mW$ are sampled from a Guassian distribution: 
\begin{itemize}
	\item $\vw_j \in \R^{D}: \vw_j \sim \mathcal{N}(\mathbf{0}, \mathbf{I}_{D})$
\end{itemize}

A few derived variables are: 
\begin{itemize}
	\item $\vh \in \R^{N}: h_j \doteq \sign{\left(\vw_j\tran\vf_t\right)}$
	\item $\vp \in \R^{N}: p_j \doteq \sigmoid{\left(\vw_j\tran\vf_s\right)}$
	\item $\vl \in \R^{N}: l_j \doteq -h_j\log{\left(p_j\right)} - \left(1 - h_j\right)\log{\left(1 - p_j\right)}$
\end{itemize}

We also use a few shorthand notations: 
\begin{itemize}
	\item $\dir{(\vx)} \doteq \begin{cases}
		\frac{\vx}{\normtwo{\vx}} & \text{if } \normtwo{\vx} > 0\,, \\
		\vzero & \text{otherwise}\,.
	\end{cases}$
	\item $\anglefn{(\vx, \vy)} \doteq \arccos{(\dir{(\vx)}\tran\dir{(\vy)})}$
	\item $\sphere{n} \doteq \left\{ \vx \in \R^n \mid \normtwo{\vx} = 1\right\}$: the $n$-dimensional unit hypersphere
	\item $d_{geo}$: the geodesic distance, with which $\sphere{n}$ forms a legitimate metric space
	\item $\mu_{n}$: the Lebesgue measure on $\R^n$
	\item $\sigma_{n-1}$: the surface area measure on $\sphere{n}$
	\item $A_{n-1} \doteq \int_{\sphere{n}}{{}\,\mathrm{d}\sigma_{n-1}} = \frac{2 \pi^{n / 2}}{\Gamma{(n / 2)}}$: the surface area of $\sphere{n}$
	\item $\mathbb{I}$: the indicator function
	\item $p(x)$: the p.d.f. of $x$
\end{itemize}

\begin{lemma}
\label{lem:unisp}
Let $\vx \in \R^n$ ($n \in \N^+$) be a random vector with each element $x_i \sim \mathcal{N}(0, 1)$ independently. Then for any function $f: \sphere{n} \cup \{\vzero\} \to \R$ satisfying
\begin{itemize}
	\item $f$ is bounded, 
	\item $f$ is continuous on $\sphere{n}$, 
\end{itemize}
there holds
\begin{align}
	\expect{f(\dir{(\vx)})}{\vx}
	= \frac{1}{A_{n-1}} \int_{\sphere{n}}{{f(\vu)} \,\mathrm{d}\sigma_{n-1}{(\vu)}}\,.
\end{align}
\end{lemma}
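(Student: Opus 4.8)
The plan is to exploit the rotational invariance (isotropy) of the standard Gaussian density by passing to spherical coordinates, which cleanly separates the radial and angular parts of the integral and reduces the claim to a one-line normalization. First I would rewrite the expectation as a Lebesgue integral against the isotropic density,
\begin{equation}
	\expect{f(\dir{(\vx)})}{\vx} = \int_{\R^n}{f(\dir{(\vx)})\,(2\pi)^{-n/2}\exp{\left(-\tfrac{1}{2}\normtwo{\vx}^2\right)}\,\mathrm{d}\mu_n{(\vx)}}\,,
\end{equation}
observing that the density depends on $\vx$ only through $\normtwo{\vx}$. Since the Gaussian assigns zero mass to $\{\vzero\}$, the value $f(\vzero)$ is irrelevant and the convention $\dir{(\vzero)}=\vzero$ causes no difficulty.

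Next I would invoke the standard polar decomposition of Lebesgue measure, $\mathrm{d}\mu_n{(\vx)} = r^{n-1}\,\mathrm{d} r\,\mathrm{d}\sigma_{n-1}{(\vu)}$ under the substitution $\vx = r\vu$ with $r = \normtwo{\vx} > 0$ and $\vu = \dir{(\vx)} \in \sphere{n}$. Because $\dir{(r\vu)} = \vu$ for $r>0$, the integrand factorizes into a radial profile times $f(\vu)$, and Tonelli's theorem yields
\begin{align}
	\expect{f(\dir{(\vx)})}{\vx}
	&= \int_{0}^{\infty}{\int_{\sphere{n}}{f(\vu)\,(2\pi)^{-n/2}e^{-r^2/2}\,r^{n-1}\,\mathrm{d}\sigma_{n-1}{(\vu)}\,\mathrm{d} r}} \notag \\
	&= \left(\int_{0}^{\infty}{(2\pi)^{-n/2}e^{-r^2/2}r^{n-1}\,\mathrm{d} r}\right)\int_{\sphere{n}}{f(\vu)\,\mathrm{d}\sigma_{n-1}{(\vu)}}\,.
\end{align}
Writing the radial factor as $c_n$, it then remains only to identify $c_n = 1/A_{n-1}$.

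To pin down the constant I would apply the identity just derived to the special case $f \equiv 1$: the left-hand side equals $1$ (total probability), while the right-hand side equals $c_n \int_{\sphere{n}}{\mathrm{d}\sigma_{n-1}} = c_n A_{n-1}$, so $c_n = 1/A_{n-1}$. This trick avoids evaluating the Gamma-function integral $\int_0^\infty e^{-r^2/2}r^{n-1}\,\mathrm{d} r$ explicitly and sidesteps any appeal to the uniqueness of the rotation-invariant measure on $\sphere{n}$.

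The only genuinely delicate step is the polar decomposition of $\mu_n$ together with the Tonelli interchange; everything else is bookkeeping. I expect the main obstacle to be stating this decomposition at the right level of rigor: one must check measurability of $f \circ \dir$ (immediate from continuity of $f$ on $\sphere{n}$ and of $\dir$ away from the origin) and verify the hypotheses of Tonelli, which hold because boundedness $|f| \le M$ dominates the full integrand by $M(2\pi)^{-n/2}e^{-r^2/2}r^{n-1}$, an integrable function on $(0,\infty) \times \sphere{n}$.
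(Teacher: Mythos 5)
Your proposal is correct and follows essentially the same route as the paper: both pass to polar coordinates to factor the isotropic Gaussian integral into a radial part times the surface integral over $\sphere{n}$, after discarding the null set $\{\vzero\}$. The only (minor, and rather tidy) difference is that you fix the radial constant by normalization with $f \equiv 1$, whereas the paper evaluates the radial Gamma-type integral directly to obtain $1/A_{n-1}$.
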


\begin{proof}
\begin{align}
	&\expect{f(\dir{(\vx)})}{\vx} \notag\\
	=& \int_{\R^n}{{f(\dir{(\vx)}) \cdot p(\vx)}\,\mathrm{d}\mu_{n}(\vx)} + \notag\\
	\quad &
	\int_{\{\vzero\}}{{f(\dir{(\vx)}) \cdot p(\vx)}\,\mathrm{d}\mu_{n}(\vx)} \\
	=& \int_{\R^n}{{f(\dir{(\vx)}) \cdot p(\vx)}\,\mathrm{d}\mu_{n}(\vx)} + \underbrace{0}_{\text{due to $f$'s boundedness}} \\
	=& \int_{\R^n}{{f(\dir{(\vx)}) \cdot {(2\pi)}^{-n/2} e^{ -{\frac{1}{2}} {\|\vx\|_2^2} }} \,\mathrm{d}\mu_{n}(\vx)} \\
	=& {(2\pi)}^{-n/2} \underbrace{\int_{0}^{\infty}{{{\left[ \int_{\sphere{n}}{{f(\vu)} \,\mathrm{d}\sigma_{n-1}{(\vu)}} \right]} \cdot e^{-\frac{r^2}{2}} r^{n-1}} \,\mathrm{d}r}}_{\text{integration by substitution (from Cartesian to polar)}}  \\
	=& (2\pi)^{-n/2} \int_{0}^{\infty}{\left(e^{-r^2/2}r^{n-1}\right)\,\mathrm{d}r} \int_{\sphere{n}}{{f(\vu)} \,\mathrm{d}\sigma_{n-1}{(\vu)}}  \\
	=& \underbrace{\frac{\Gamma{(n / 2)}}{2 \pi^{n / 2}}}_{\frac{1}{A_{n-1}}} \int_{\sphere{n}}{{f(\vu)} \,\mathrm{d}\sigma_{n-1}{(\vu)}} \,.
\end{align}
\end{proof}

\begin{corollary}
\label{cor:indicator}
Let $\vx \in \R^n$ ($n \in \N^+$) be a random vector with each element $x_i \sim \mathcal{N}(0, 1)$ independently. Then for any Borel set $\sB$ in $\sphere{n}$, 
\begin{align}
	\prob{\dir{(\vx)} \in \sB}
	= \frac{\sigma_{n-1}{(\sB)}}{A_{n-1}} \,.
\end{align}
\end{corollary}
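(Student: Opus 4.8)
The plan is to apply Lemma~\ref{lem:unisp} with the test function $f = \mathbb{I}_{\sB}$, the indicator of $\sB$. If the lemma applied, the left-hand side would read $\expect{\mathbb{I}_{\sB}(\dir{(\vx)})}{\vx} = \prob{\dir{(\vx)} \in \sB}$, while the right-hand side would read $\frac{1}{A_{n-1}}\int_{\sphere{n}}{\mathbb{I}_{\sB}(\vu)\,\mathrm{d}\sigma_{n-1}(\vu)} = \frac{\sigma_{n-1}(\sB)}{A_{n-1}}$, which is exactly the claimed identity. So the corollary is morally an immediate specialization of the lemma.

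The obstacle is that $\mathbb{I}_{\sB}$ is \emph{not} continuous for a general Borel set $\sB$, so Lemma~\ref{lem:unisp}, which requires continuity on $\sphere{n}$, does not apply verbatim. I would bridge this gap by a standard measure-uniqueness argument. First I would introduce two Borel probability measures on the compact metric space $(\sphere{n}, d_{geo})$: the pushforward $\nu_1(\sB) \doteq \prob{\dir{(\vx)} \in \sB}$ and the normalized surface measure $\nu_2(\sB) \doteq \sigma_{n-1}(\sB)/A_{n-1}$. Here I would note the two well-definedness points that $\dir$ is Borel measurable (so $\nu_1$ is a legitimate pushforward) and that $\prob{\vx = \vzero} = 0$ (so the origin, where $\dir$ is set to $\vzero$, contributes nothing). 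With these in place, Lemma~\ref{lem:unisp} asserts precisely that $\int{f\,\mathrm{d}\nu_1} = \int{f\,\mathrm{d}\nu_2}$ for every bounded continuous $f$ on $\sphere{n}$.

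It then remains to upgrade agreement on bounded continuous functions to agreement on all Borel sets, i.e.\ to conclude $\nu_1 = \nu_2$. Concretely, for any closed set $F \subseteq \sphere{n}$ I would approximate $\mathbb{I}_F$ from above by the bounded Lipschitz functions $f_k(\vu) = \max{(0,\, 1 - k\, d_{geo}(\vu, F))}$, which decrease pointwise to $\mathbb{I}_F$ as $k \to \infty$. Applying dominated convergence (the $f_k$ are uniformly bounded by $1$ on finite measures) to both $\nu_1$ and $\nu_2$, and using $\int{f_k\,\mathrm{d}\nu_1} = \int{f_k\,\mathrm{d}\nu_2}$ from the lemma, gives $\nu_1(F) = \nu_2(F)$ for all closed $F$. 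Since the closed sets form a $\pi$-system generating the Borel $\sigma$-algebra on $\sphere{n}$, the $\pi$--$\lambda$ theorem extends this equality to all Borel sets, yielding $\nu_1 = \nu_2$ and hence the claim. The only genuine difficulty is this continuity gap; once it is resolved by the approximation-plus-$\pi$--$\lambda$ machinery, the proof is complete, and the statement is simply the assertion that the direction of a standard Gaussian vector is uniformly distributed on the sphere.
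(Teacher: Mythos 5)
Your proposal is correct and follows essentially the same route as the paper: both apply Lemma~\ref{lem:unisp} to bounded Lipschitz approximations of an indicator function and then extend the resulting equality from a generating class of sets to all Borel sets. If anything, your closed-set variant is slightly cleaner, since $\max(0,\,1-k\,d_{geo}(\vu,F))$ genuinely decreases pointwise to $\mathbb{I}_F$ when $F$ is closed, whereas the paper's analogous approximants for an open set converge to the indicator of its closure, and your explicit $\pi$--$\lambda$ step replaces the paper's vaguer appeal to ``induction.''
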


\begin{proof}
For an open set $\sO$ in $\sphere{n}$ (rename it to make things clear), 
define 
\begin{align}
	f_{\sO}{(\vu)} = 
	\begin{cases}
		0, & \text{for } \vu = \vzero; \\
		\chi_{\sO}, & \text{for } \vu \in \sphere{n} \,,
	\end{cases}
\end{align}
where $\chi_{\sO}$ is the characteristic function of $\sO$ and 
\begin{align}
	f_{\sO}^{(k)}{(\vu)} = 
	\begin{cases}
		0 & \text{if } \vu = \vzero; \\
		\max{(0, 1 - k \cdot \inf_{\vv \in \sO}{d_{geo}{(\vu, \vv)})}} &\text{if } \vu \in \sphere{n}
	\end{cases}
\end{align}
for $k \in \sN^*$. Then the conditions of Lebesgue's dominated convergence theorem are met, i.e., 
\begin{itemize}
	\item $f_{\sO}^{(k)}$'s are bounded, 
	\item $f_{\sO}^{(k)}$'s converge pointwise to $f_{\sO}$. 
\end{itemize}
Thus 
\begin{align}
	&\prob{\dir{(\vx)} \in \sO} \notag\\
	=& \, \expect{f_{\sO}{(\dir{(\vx)})}}{\vx} \\
	=& \underbrace{\lim_{k \to \infty}{\expect{f_{\sO}^{(k)}{(\dir{(\vx)})}}{\vx}}}_{\text{due to dominated convergence theorem}} \\
	=& \lim_{k \to \infty}{\underbrace{\left(\frac{1}{A_{n-1}} \int_{\sphere{n}}{{f_{\sO}^{(k)}(\vu)} \,\mathrm{d}\sigma_{n-1}{(\vu)}}\right)}_{\text{due to \Lemref{lem:unisp}}}} \\
	=& \frac{1}{A_{n-1}} \lim_{k \to \infty}{\left(\int_{\sphere{n}}{{f_{\sO}^{(k)}(\vu)} \,\mathrm{d}\sigma_{n-1}{(\vu)}}\right)} \\
	=& \frac{1}{A_{n-1}} \underbrace{\int_{\sphere{n}}{{f_{\sO}(\vu)} \,\mathrm{d}\sigma_{n-1}{(\vu)}}}_{\text{due to dominated convergence theorem}} \\
	=& \frac{1}{A_{n-1}} \int_{\sphere{n}}{{\chi_{\sO}(\vu)} \,\mathrm{d}\sigma_{n-1}{(\vu)}} \\
	=& \frac{1}{A_{n-1}} \int_{\sO}{ \,\mathrm{d}\sigma_{n-1}{(\vu)}} \\
	=& \frac{\sigma_{n-1}{(\sO)}}{A_{n-1}}\,.
\end{align}

Now that the equation holds for any open set $\sO$, it can be shown by induction that 
\begin{align}
	\prob{\dir{(\vx)} \in \sB}
	= \frac{\sigma_{n-1}{(\sB)}}{A_{n-1}}
\end{align}
for any Borel set $\sB$. 
\end{proof}

\begin{lemma}
\label{lem:angpdf}
\begin{equation}
	\pd{\anglefn{{(\vf_t, \vf_s)}} = \theta} = \frac{A_{D-2}}{A_{D-1}} \sin^{D-2}{(\theta)}
\end{equation}
for $\theta \in (0, \pi)$.
\end{lemma}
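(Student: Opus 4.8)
The plan is to reduce the density of the angle to a one–dimensional hyperspherical-coordinate computation. First I would use \Corref{cor:indicator}: since $\vf_t$ and $\vf_s$ are independent standard Gaussians on $\R^D$, their directions $\dir{(\vf_t)}$ and $\dir{(\vf_s)}$ are independent and each distributed according to the normalized surface measure $\sigma_{D-1}/A_{D-1}$ on $\sphere{D}$, i.e.\ uniformly on the unit hypersphere. Because $\anglefn{(\vf_t, \vf_s)}$ is a function of these two directions alone, the statement becomes a question about two independent uniform points on $\sphere{D}$.

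Next I would exploit rotational invariance. Conditioning on $\dir{(\vf_t)} = \vu$, the direction $\dir{(\vf_s)}$ remains uniform (by independence), and $\anglefn{(\vf_t, \vf_s)}$ is exactly the polar angle of $\dir{(\vf_s)}$ measured from the pole $\vu$. By the rotational symmetry of the uniform law established in \Corref{cor:indicator}, the law of this polar angle does not depend on $\vu$; hence the marginal density of $\anglefn{(\vf_t, \vf_s)}$ coincides with the density of the polar angle of a single uniform point on $\sphere{D}$, and I may fix the pole to be, say, the first coordinate axis.

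The core of the argument is the decomposition of the surface measure in hyperspherical coordinates. Writing a point of $\sphere{D}$ as $\vu = (\cos\theta,\ \sin\theta \cdot \vv)$ with $\theta \in (0,\pi)$ the polar angle and $\vv \in \sphere{D-1}$, the latitude at angle $\theta$ is a $(D-2)$-sphere of radius $\sin\theta$, and the surface measure factors as
\begin{equation}
	\mathrm{d}\sigma_{D-1}(\vu) = \sin^{D-2}{(\theta)}\,\mathrm{d}\theta\,\mathrm{d}\sigma_{D-2}(\vv)\,.
\end{equation}
Integrating out the latitude sphere contributes the factor $A_{D-2}$, and normalizing by the total area $A_{D-1}$ yields the claimed density $\frac{A_{D-2}}{A_{D-1}}\sin^{D-2}{(\theta)}$ for $\theta \in (0,\pi)$.

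The main obstacle is justifying the measure factorization with its $\sin^{D-2}{(\theta)}$ Jacobian; the rest is bookkeeping. I would obtain it by computing the Gram determinant of the parametrization $(\theta, \vv) \mapsto (\cos\theta, \sin\theta\cdot\vv)$, or equivalently by applying the co-area formula to the map $\vu \mapsto \arccos{(u_1)}$ on $\sphere{D}$; either way the latitudinal radius $\sin\theta$ enters to the power $D-2$ because the latitude is a $(D-2)$-dimensional surface. Once this factor is in hand, the density follows immediately.
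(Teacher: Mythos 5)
Your proposal is correct and follows essentially the same route as the paper's proof: both reduce to uniform directions on $\sphere{D}$ via \Corref{cor:indicator}, condition on $\vf_t$, and extract the $\frac{A_{D-2}}{A_{D-1}}\sin^{D-2}(\theta)$ density from the latitude decomposition of the spherical surface measure. The only cosmetic difference is that the paper first writes the cap area as $\int_{0}^{\theta} A_{D-2}\sin^{D-2}(\phi)\,\mathrm{d}\phi$ and differentiates the CDF, whereas you state the measure factorization directly.
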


\begin{proof}
	\begin{align}
		& \prob{\anglefn{{(\vf_t, \vf_s)}} \le \theta \mid \vf_t} \notag\\
		=& \prob{\dir{(\vf_s)} \in \{ \vx \in \sphere{D} \mid d_{geo}{(\dir{(\vf_t)}, \vx)} \le \theta \} \mid \vf_t} \\
		=& \underbrace{ \frac{1}{A_{D-1}} {\sigma_{D-1}{\left( \left\{ \vx \in \sphere{D} \mid d_{geo}{(\dir{(\vf_t)}, \vx)} \le \theta \right\} \right)}} }_{\text{due to \Corref{cor:indicator}}} \\
		=& \frac{1}{A_{D-1}} \int_{0}^{\theta}{{\left(A_{D-2} \sin^{D-2}{(\phi)}\right)}\,\mathrm{d}\phi} \\
		=& \frac{A_{D-2}}{A_{D-1}} \int_{0}^{\theta}{{\sin^{D-2}{(\phi)}}\,\mathrm{d}\phi}\,.
	\end{align}
	Then 
	\begin{align}
		&\prob{\anglefn{{(\vf_t, \vf_s)}} \le \theta} \notag\\
		=& \int_{\R^D}{\left( \prob{\anglefn{{(\vf_t, \vf_s)}} \le \theta \mid \vf_t} \cdot p(\vf_t)\right)\, \mathrm{d} \mu_{D}{(\vf_t)}} \\
		=& \int_{\R^D}{\left( \left( \frac{A_{D-2}}{A_{D-1}} \int_{0}^{\theta}{{\sin^{D-2}{(\phi)}}\,\mathrm{d}\phi}\right) \cdot p(\vf_t)\right)\, \mathrm{d} \mu_{D}{(\vf_t)}} \\
		=& \left( \int_{\R^D}{\left( p(\vf_t)\right)\, \mathrm{d} \mu_{D}{(\vf_t)}}\right) \left(\frac{A_{D-2}}{A_{D-1}} \int_{0}^{\theta}{{\sin^{D-2}{(\phi)}}\,\mathrm{d}\phi}\right) \\
		=& \frac{A_{D-2}}{A_{D-1}} \int_{0}^{\theta}{{\sin^{D-2}{(\phi)}}\,\mathrm{d}\phi}\,.
	\end{align}
	Thus
	\begin{align}
		\pd{\anglefn{{(\vf_t, \vf_s)}} = \theta}
		=& \deri{}{\theta} \prob{\anglefn{{(\vf_t, \vf_s)}} \le \theta} \\
		=& \deri{}{\theta} \left( \frac{A_{D-2}}{A_{D-1}} \int_{0}^{\theta}{{\sin^{D-2}{(\phi)}}\,\mathrm{d}\phi} \right) \\
		=& \frac{A_{D-2}}{A_{D-1}}\deri{}{\theta} \int_{0}^{\theta}{{\sin^{D-2}{(\phi)}}\,\mathrm{d}\phi} \\
		=& \frac{A_{D-2}}{A_{D-1}} \sin^{D-2}{(\theta)} \,.
	\end{align}
\end{proof}

\subsection{Proof of Claim~\ref{claim:lsh_p}}

\begin{proof}
	First, let us inspect the properties of the $l_j$'s, which can be 
	rewritten as 
	\begin{align}
		l_j = 
		\begin{cases}
			-\log{\left(p_j\right)} & \text{if } h_j = 1\,,\\
			-\log{\left(1 - p_j\right)} & \text{if } h_j = 0\,. 
		\end{cases}
	\end{align}
	Note that $\log{2} = -\log{\left(1 - \frac{1}{2}\right)}$. 
	Thus $l_j < \log{2}$ if and only if 
	\begin{align}
		\vp[j]
		\begin{cases}
			> \frac{1}{2} & \text{if } \vh[j] = 1\,,\\
			< \frac{1}{2} & \text{if } \vh[j] = 0\,,
		\end{cases}
	\end{align}
	which is equivalent to 
	\begin{align}
		\vw_j\tran\vf_s
		\begin{cases}
			> 0 & \text{if } \vw_j\tran\vf_t > 0\,,\\
			< 0 & \text{if } \vw_j\tran\vf_t \le 0\,. 
		\end{cases}
	\end{align}
	In other words, 
	\begin{align}
		\dir{(\vw_j)} \in \sL_{\vf_t, \vf_s}\,,
	\end{align}
	where 
	\begin{align}
		\sL_{\vf_t, \vf_s} = &
		\left\{
		\vx \in \sphere{D} \mid 
		\left(\left(\vx \tran \vf_s > 0\right) \wedge \left(\vx \tran \vf_t > 0\right)\right) 
		\right\}
		\vee \notag\\
		&\left\{
		\vx \in \sphere{D} \mid 
		\left(\left(\vx \tran \vf_s < 0\right) \wedge \left(\vx \tran \vf_t \le 0\right)\right) 
		\right\}
	\end{align}
	is the union of two lunes. 

	Applying \Corref{cor:indicator}, we have
	\begin{align}
		& \prob{\dir{(\vw_j)} \in \sL_{\vf_t, \vf_s} \mid \vf_t, \vf_s} \notag\\
		=& \frac{\sigma_{D-1}{(\sL_{\vf_t, \vf_s})}}{A_{D-1}} \\
		=& \frac{2}{A_{D-1}}\underbrace{\int_{0}^{\frac{\pi }{2}}\frac{\pi - \anglefn{(\vf_t, \vf_s)}}{2\pi} A_1 \cos{(\theta)} A_{D-3} \sin^{D-3}{(\theta)} \mathrm{d}\theta}_{\text{$\sphere{D}$ viewed as a union of tori}} \\
		=& \frac{\pi - \anglefn{(\vf_t, \vf_s)}}{\pi A_{D-1}} \int_{0}^{\frac{\pi }{2}} \left( A_1 \cos{(\theta)} \cdot A_{D-3} \sin^{D-3}{(\theta)} \right) \,\mathrm{d}\theta \\
		=&\frac{\pi - \anglefn{(\vf_t, \vf_s)}}{\pi A_{D-1}} A_{D-1}\\
		=& 1 - \frac{\anglefn{(\vf_t, \vf_s)}}{\pi}\,.
	\end{align}
	Thus 
	\begin{align}
		& \prob{\dir{(\vw_j)} \in \sL_{\vf_t, \vf_s} \mid \anglefn{(\vf_t, \vf_s)} = \theta} \notag\\
		=& \int_{\R^{D \times D}}(\prob{\dir{(\vw_j)} \in \sL_{\vf_t, \vf_s} \mid \vf_t, \vf_s} \cdot \notag\\
		&\qquad\pdfunc{\vf_t, \vf_s \mid \anglefn{(\vf_t, \vf_s)} = \theta})\mathrm{d} \mu_{D \times D}{(\vf_t, \vf_s)} \\
		=& \int_{\R^{D \times D}}\left( 1 - \frac{\theta}{\pi} \right) \cdot \pdfunc{\vf_t, \vf_s \mid \anglefn{(\vf_t, \vf_s)} = \theta}\mathrm{d} \mu_{D \times D}{(\vf_t, \vf_s)} \\
		=& \left( 1 - \frac{\theta}{\pi} \right) \int_{\R^{D \times D}} \pdfunc{\vf_t, \vf_s \mid \anglefn{(\vf_t, \vf_s)} = \theta} \,\mathrm{d} \mu_{D \times D}{(\vf_t, \vf_s)} \\
		=& 1 - \frac{\theta}{\pi}\,.
	\end{align}
\end{proof}

\subsection{Proof of Claim~\ref{claim:degree_p}}

\begin{proof}
	\begin{align}
		\label{eq:c3e1}
		& \prob{\bigwedge_{j = 1}^{N}{\left(l_j < \log{2}\right)} \mid \anglefn{(\vf_t, \vf_s)} = \theta} \nonumber \\
		=& \underbrace{\prod_{j=1}^{N}{\prob{l_j < \log{2} \mid \anglefn{{(\vf_t, \vf_s)}} = \theta}}}_{\text{due to conditional independence of $l_j$'s}} \\
		=& \underbrace{\left(1 - \frac{\theta}{\pi}\right)^{N}}_{\text{due to Claim~\ref{claim:lsh_p}}} \,.
	\end{align}
	Applying the Bayes' rule, the conditional probability density of $\anglefn{(\vf_t, \vf_s)}$ can be derived as 
	\begin{align}
		\label{eq:c3e2}
		& \pd{ \anglefn{(\vf_t, \vf_s)} = \theta \mid \bigwedge_{j = 1}^{N}{\left(l_j < \log{2}\right)}} \nonumber \\
		=& \frac{\pd{\bigwedge_{j = 1}^{N}{\left(l_j < \log{2}\right)} \mid \anglefn{(\vf_t, \vf_s)} = \theta} \cdot \pd{ \anglefn{(\vf_t, \vf_s)} = \theta }}{\pd{\bigwedge_{j = 1}^{N}{\left(l_j < \log{2}\right)}}}  \\
		=& \frac{\pd{\bigwedge_{j = 1}^{N}{\left(l_j < \log{2}\right)}|\anglefn{(\vf_t, \vf_s)} = \theta} \pd{ \anglefn{(\vf_t, \vf_s)} = \theta }}{\int_{0}^{\pi}{{\pd{\bigwedge_{j = 1}^{N}{\left(l_j < \log{2}\right)}|\anglefn{(\vf_t, \vf_s)} = \theta} \pd{ \anglefn{(\vf_t, \vf_s)} = \theta }}\mathrm{d}\theta}}  \\
		=& \underbrace{\frac{\left(1 - \frac{\theta}{\pi}\right)^{N} \cdot \frac{A_{D-2}}{A_{D-1}} \sin^{D-2}{(\theta)}}{\int_{0}^{\pi}{{\left(\left(1 - \frac{\theta}{\pi}\right)^{N} \cdot \frac{A_{D-2}}{A_{D-1}} \sin^{D-2}{(\theta)} \right)} \,\mathrm{d}\theta}}}_\text{due to \Eqref{eq:c3e1} and \Corref{lem:angpdf}} \\
		=& \frac{\left(1 - \frac{\theta}{\pi}\right)^{N} \cdot \sin^{D-2}{(\theta)}}{\int_{0}^{\pi}{{\left(\left(1 - \frac{\theta}{\pi}\right)^{N} \cdot \sin^{D-2}{(\theta)} \right)} \,\mathrm{d}\theta}} \,.
	\end{align}
	Thus 
	\begin{align}
		& \prob{\anglefn{{(\vf_t, \vf_s)}} < \epsilon \mid \bigwedge_{j = 1}^{N}{\left(l_j < \log{2}\right)}} \notag \\
		=& \int_{0}^{\epsilon}{{\pd{ \anglefn{(\vf_t, \vf_s)} = \theta \mid \bigwedge_{j = 1}^{N}{\left(l_j < \log{2}\right)} }} \,\mathrm{d}\theta} \\
		=& \int_{0}^{\epsilon}{\underbrace{\left( \frac{\left(1 - \frac{\theta}{\pi}\right)^{N} \cdot \sin^{D-2}{(\theta)}}{\int_{0}^{\pi}{{\left(\left(1 - \frac{\theta}{\pi}\right)^{N} \cdot \sin^{D-2}{(\theta)} \right)} \,\mathrm{d}\theta}} \right)}_{\text{due to \Eqref{eq:c3e2}}} \,\mathrm{d}\theta} \\
		=& \frac{\int_{0}^{\epsilon}{{\left(\left(1 - \frac{\theta}{\pi}\right)^{N} \cdot \sin^{D-2}{(\theta)} \right)} \,\mathrm{d}\theta}}{\int_{0}^{\pi}{{\left(\left(1 - \frac{\theta}{\pi}\right)^{N} \cdot \sin^{D-2}{(\theta)} \right)} \,\mathrm{d}\theta}} \,.
	\end{align}
\end{proof}

\end{document}